\begin{document}

%
\runningtitle{A Last Switch Dependent Analysis of Satiation and Seasonality in Bandits}

%
\runningauthor{Pierre Laforgue, Giulia Clerici, Nicolò Cesa-Bianchi, Ran Gilad-Bachrach}

\twocolumn[

\aistatstitle{A Last Switch Dependent Analysis of\\ Satiation and Seasonality in Bandits}

\aistatsauthor{ Pierre Laforgue\,$^1$ \And Giulia Clerici\,$^1$\hspace{1cm} \And  Nicolò Cesa-Bianchi\,$^1$\hspace{1cm} \And Ran Gilad-Bachrach\,$^{2, 3, 4}$ }

\medskip

\aistatsaddress{ $^1$ Dept.\ of Computer Science, Università degli Studi di Milano, Milan, Italy\\
$^2$ Department of Bio-Medical Engineering, Tel-Aviv University, Israel\\
$^3$ Edmond J. Safra Center for Bioinformatics, $^4$ Sagol School of Neuroscience} ]

\begin{abstract}
Motivated by the fact that humans like some level of unpredictability or novelty, and might therefore get quickly bored when interacting with a stationary policy, we introduce a novel non-stationary bandit problem, where the expected reward of an arm is fully determined by the time elapsed since the arm last took part in a switch of actions.
Our model generalizes previous notions of delay-dependent rewards, and also relaxes most assumptions on the reward function.
This enables the modeling of phenomena such as progressive satiation and periodic behaviours.
Building upon the Combinatorial Semi-Bandits (CSB) framework, we design an algorithm and prove a bound on its regret with respect to the optimal non-stationary policy (which is NP-hard to compute).
Similarly to previous works, our regret analysis is based on defining and solving an appropriate trade-off between \mbox{approximation} and estimation.
Preliminary experiments confirm the superiority of our algorithm over both the oracle greedy approach and a vanilla CSB solver.
\end{abstract}

\section{INTRODUCTION}

As the range of applications of multi-armed bandits increases, new algorithms able to deal with a wider range of phenomena must be designed and analyzed.
When interacting with humans, one can often observe behaviours such as satiation and seasonality, that are hardly captured by stationary policies.
Satiation typically occurs when making recommendations, e.g., for books, music, or movies \citep{kunaver2017diversity,leqi2020rebounding}.
Stationary policies, which learn to recommend a single cuisine or musical genre, fail to capture the human desire for novelty \citep{dimitrijevic1972habituation,kovacs2018rotating}.
Another widespread type of nonstationary behaviour is seasonality.
For example, certain music genres may be preferred during working hours, whereas chill playlists could be more appreciated on evenings or weekends \citep{schedl2018current}.
Nonstationary behaviours also naturally \mbox{occur} in the medical domain.
Consider people suffering from various mental disorders, such as post-traumatic stress disorder, depression, or anxiety.
Some \mbox{remedies} \mbox{include} micro-interventions (mindfulness, positive psychology exercises), Cognitive Behavioural Therapy, or Dialectical Behavioral Therapy \citep{meinlschmidt2016smartphone,owen2018va,schroeder2018pocket,fuller2019randomized}.
However, studies have shown that a diminishing return effect exists when the same intervention is applied repeatedly \citep{paredes2014poptherapy}, motivating the modelling of satiation effects.

In this work, we introduce the first bandit model which captures nonstationary phenomena that include satiation and seasonal effects.
Unlike previous works, where the state structure is fully determined by how long ago each arm was last played, our notion of state keeps track of the time elapsed since an action took part in a switch (i.e., when the arm being pulled changes).
This allows to modulate satiation effects at a level of detail not within reach of previous models.
In \mbox{addition}, our model drops many assumptions on the shape of the expected reward function (such as concavity, Lipschitz continuity, monotonicity), while only retaining boundedness which is plausible in human interactions.
Dispensing with monotonicity enables the modelling of periodicity, which is key to capture seasonality.

\paragraph{Technical contributions.}
The backbone of our analysis goes along the lines of previous works: (1) we show that computing the optimal policy (with value OPT) in our setting is NP-hard; (2) we prove that OPT is well approximated by a certain class of simple policies; (3) we show how to learn the best policy in the approximating class.
Also similarly to previous works, we initially consider cyclic policies (with bounded block length) as approximating class.
One of the main technical hurdles when learning a cyclic policy in a nonstationary setting is the calibration problem.
Namely, the expected reward of the cycle block (i.e., the sequence of arm pulls that is being repeated in the cycle) depends on the current state of all the arms appearing in the block. A simple, yet impractical solution, is to force the learning algorithm to play every block twice in a row, where the first play of a block calibrates the reward estimates computed in the second play. Our solution is radically different: rather than forcing the algorithm to artificially play additional arms, we calibrate an arm by referring to the first time the arm is pulled in the block. These calibration pulls are not used to compute reward estimates, because their reward depends on the last time the arms took part in a switch among the blocks previously played by the learning algorithm. By ignoring calibration pulls, we underestimate the block reward by at most $K$ (the number of arms), which in typical instances is much smaller than the block length.

Our calibration approach reduces the problem of learning the best calibrated block to that of solving an instance of Combinatorial Semi-Bandits (CSB), which we can solve using algorithms whose regret is well understood. As both the regret bound and the approximation factor for OPT depend on the block length $d$, we can then choose $d$ to optimize our performance, thus obtaining a regret bound against OPT of order $KT^{3/4}$, ignoring log factors. Note that this does imply polynomial-time convergence to OPT, as solving the decoding problem (an intermediate step in the regret minimization procedure) for our instance of CSB requires solving an integer linear program, which is NP-hard in general. 
The decoding problem amounts to compute the reward-maximizing block given the current estimates of the state-value function (which we represent with a $K \times d$ table of reward estimates). In practice, we can approximately solve it through a branch-and-bound approach using a LP relaxation to bound the value of the objective.
We implement an approximate solver based on this approach and run an experiment on a simple instance of LSD where our algorithm is seen to outperform two natural baselines.

\paragraph{Related works.}
Previous works addressed various extensions of bandits where rewards depend on past pulls. These include rested models \citep{gittins1979bandit}, such as Rotting Bandits \citep{bouneffouf2016multi,heidari2016tight,cortes2017discrepancy,levine2017rotting,warlop2018fighting,seznec2019rotting} and restless models \citep{whittle1988restless,tekin2012online}.
Similarly to \citet{cella2020stochastic}, our framework is neither rested nor restless, because the reward of arms that are pulled changes differently from the reward of arms that are not pulled.
Unlike \citet{cella2020stochastic}, we do not assume a specific form for the reward function and so we can model satiation and seasonalities.
\citet{kleinberg2018recharging} also look at a similar model, but ---unlike us--- they assume the reward functions to be concave, increasing, and Lipschitz.
Exploiting concavity (but not Lipschitzness) they prove that for every $0 < \varepsilon < 1$ there exists a periodic schedule of length $T \ge K/\varepsilon$ whose asymptotic average reward is at least $(1-\varepsilon)\mathrm{OPT}$.
\citet{pike2019recovering} also assume that the expected reward is a function of the time since the arm is last pulled, but consider reward functions drawn from a Gaussian Process with known kernel.
Another relevant work is \citet{simchi2021dynamic}, where the authors explore the possibility of pulling and collecting delay-dependent rewards from more than one arm at each time step.
Finally, note that \citet{basu2019blocking} investigate an interesting related variant where an arm becomes unavailable for a certain amount of time after each pull.
Although semantically close, we highlight that bandits with delayed feedback \citep{pike2018bandits} are unrelated to our setting.

\begin{figure*}[!t]
    \centering
    \includegraphics[width=0.88\columnwidth]{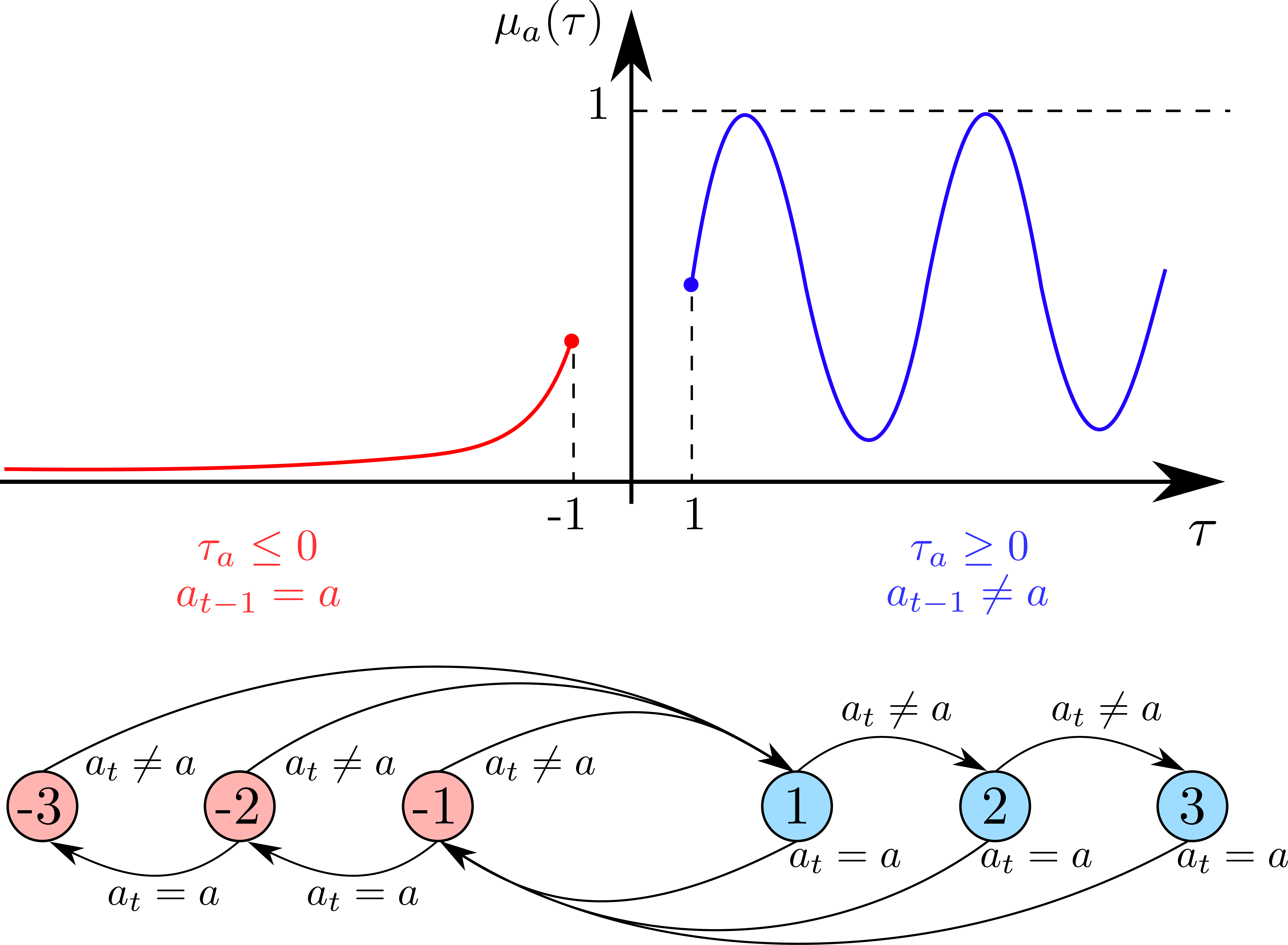}
    \qquad\quad
    \includegraphics[width=0.88\columnwidth]{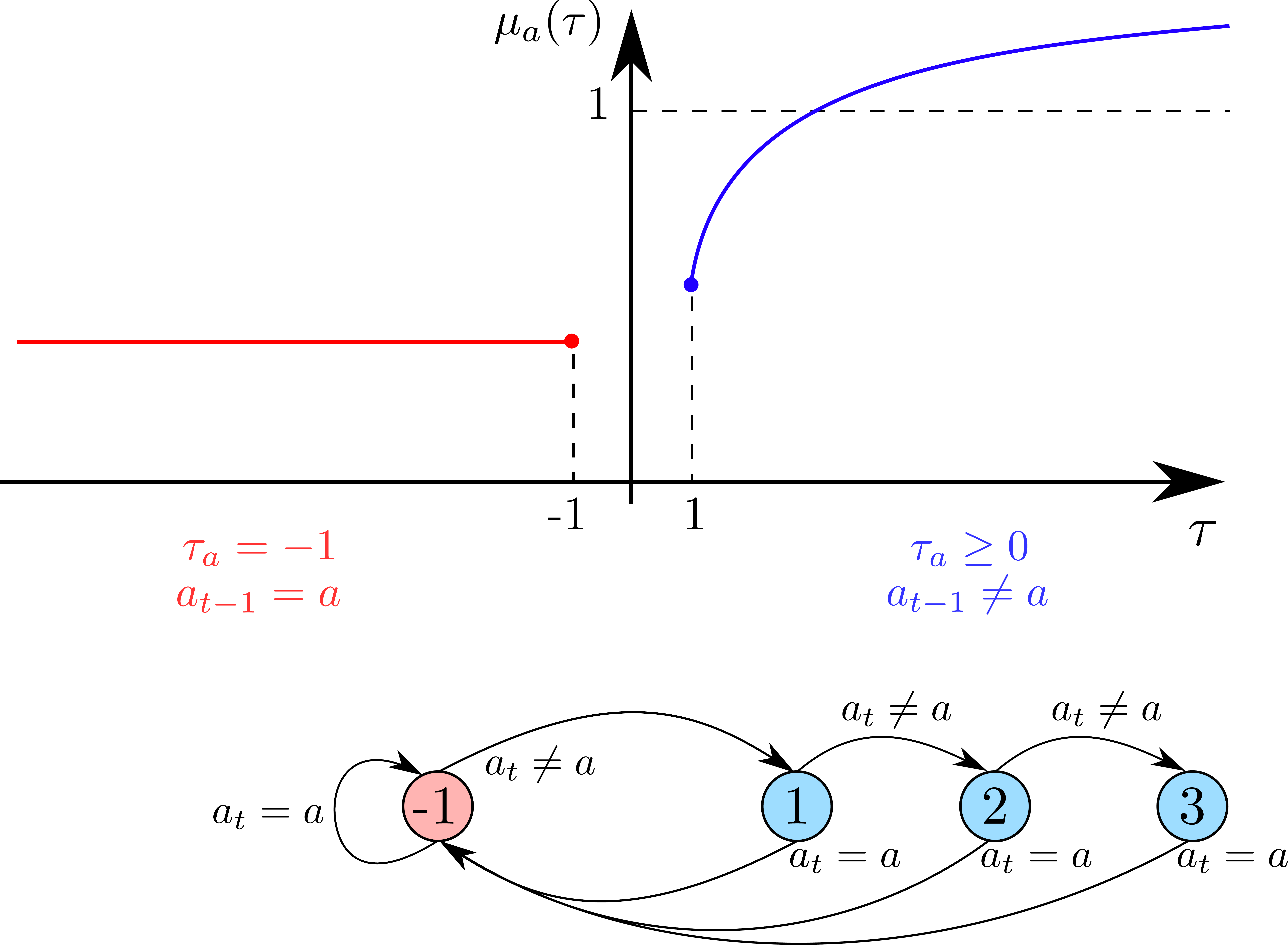}
    \caption{Transition mechanisms (reduced to $6$ and $4$ states), and examples of expected reward function $\mu_a$ for LSD Bandits (left) and models with delay-dependent rewards \citep[e.g.]{kleinberg2018recharging} (right).}
    \label{fig:models}
\end{figure*}

\section{LSD BANDITS}

Let $\mathcal{A}$ be an action set, with cardinality $K = |\A|$.
When necessary, individual actions are distinguished using superscripts, $\A = \{a^{(1)}, \ldots, a^{(K)}\}$, whereas subscripts are used for the temporal indexing, so that $a_t$ refers to the action played at time step $t$.
In our setting, the expected reward of an arm $a$ is assumed to be fully determined by the last time $t$ it took part in a switch. That is, the last $t$ when the tuple $(a_{t-1}, a_t)$ contains $a$ exactly once.
Therefore, every arm $a \in \A$ is endowed with a state $\tau_a$ ($\tau_a$ actually depends on $t$, and we should use $\tau_a(t)$, but we drop this dependence whenever it is understood from the context) such that:
\begin{equation}\label{eq:tau_init}
\tau_a(0) = 1 \text{\quad and \quad} \tau_a(t+1) = \delta_a\big(\tau_a(t), a_t\big)\,,
\end{equation}
where $\delta_a$ is the transition function given by:
\begin{equation}\label{eq:transition}
\delta_a(\tau, a^\prime) = \begin{cases}
\tau - 1        & \text{if} ~~ a^\prime = a, ~~ \tau \le 0\\[0.1cm]
-1                & \text{if} ~~ a^\prime = a, ~~ \tau \ge 0\\[0.1cm]
\hspace{0.26cm} 1 & \text{if} ~~ a^\prime \ne a, ~~ \tau \le 0\\[0.1cm]
\tau + 1        & \text{if} ~~ a^\prime \ne a, ~~ \tau \ge 0
\end{cases}
\end{equation}
The transition graph (reduced to $6$ states) is pictured in \Cref{fig:models} (bottom left).
For each arm $a \in \A$, the state $\tau_a$ keeps track of the last time a switch of actions involved action $a$.
A positive $\tau_a$ means that the last switch involving arm $a$ was $[a,\text{not }a]$, and occurred $\tau_a$ time steps ago.
In other words, arm $a$ has not been played for the last $\tau_a$ rounds.
A negative $\tau_a$ means that the last switch involving arm $a$ was $[\text{not }a, a]$, and occurred $|\tau_a|$ time steps ago.
In other words, arm $a$ has been consistently played for the last $|\tau_a|$ rounds.
We can now define Last Switch Dependent (LSD) Bandits, in which the expected reward of an arm $a$ only depends on its last switch state $\tau_a$.

\begin{definition}[LSD Bandit]\label{def:LSD}
A stochastic bandit with action set $\A$ is a LSD bandit if for every action $a \in \A$ there exists an (unknown) function $\mu_a \colon \mathbb{Z} \rightarrow [0, 1]$, nondecreasing on $\mathbb{Z}^-$, such that the expected reward of arm $a$ is given by $\mu_a(\tau_a)$, where $\tau_a$ is the \emph{last switch} state of $a$, as defined in \eqref{eq:tau_init} and \eqref{eq:transition}.
\end{definition}

Note that a LSD bandit is fully characterized by its reward functions $(\mu_a)_{a \in \A}$ and the noise distribution.
In the rest of the paper, we also use the following shortcut notation: $\mu_i = \mu_{a^{(i)}}$, $\tau_i = \tau_{a^{(i)}}$, and $\btau(t) = (\tau_a(t))_{a \in \A}$.
Given a horizon~$T$, the learner interacts with a LSD bandit as follows.
First, the arm states are initialized to $1$\footnote{In case of a nondecreasing $\mu_i$ on $\mathbb{Z}^+$, the initialization $\tau_i(0) = +\infty$ would be the most sensible. In the absence of monotonicity however, all positive states become equivalent choices for the initialization, $\tau_i(0) = 1$ is one of them.}.
Then, for all time steps $t$ from $1$ to $T$:
\begin{enumerate}[nosep]
\item the learner chooses an action $a_t \in \A$,
\item the learner obtains a stochastic reward $X_t$ with expected value $r_t \coloneqq \mu_{a_t}\big(\tau_{a_t}(t)\big)$,
\item states update: $\forall a \in \A$, $\tau_a(t+1) =
\delta_a\big(\tau_a(t), a_t\big)$.
\end{enumerate}
The goal is to maximize the expected sum of rewards, and the performance is measured through the regret
\[
R_T = \sum_{t=1}^T \mu_{a^*_t}\big(\tau_{a^*_t}(t)\big) - \mathbb{E}\left[\sum_{t=1}^T \mu_{a_t}\big(\tau_{a_t}(t)\big)\right]\,,
\]
where $[a^*_1 \ldots a^*_T]$ is the optimal sequence of actions, i.e., the sequence maximizing the expected sum of rewards over steps $1,\ldots,T$.
LSD Bandits generalize bandits with delay-dependent rewards \citep{kleinberg2018recharging,pike2019recovering,cella2020stochastic} in two ways.

First, we emphasize that the notion of last switch generalizes the notion of delay.
Whereas both definitions coincide on the positive values, last switches also allow to model progressive satiation, while delays cannot.
Indeed, in the case where the same arm is pulled consecutively, the last switch keeps updating towards the negative values, see \eqref{eq:transition}, thus allowing to associate different rewards with different numbers of consecutive pulls.
On the contrary, no matter if the arm is pulled two or ten times in a row, the delay remains equal to $0$ (or $-1$, depending on the convention), as only the fact that the same arm was pulled in the previous round matters.
In other words, a bandit with delay-dependent rewards is a special case of LSD, where the reward function $\mu_a$ is constant over the negative $\tau$'s, while for a general LSD bandit it is only assumed to be nondecreasing---see \Cref{fig:models}, top.
Assuming $\mu_a$ to be nondecreasing on $\mathbb{Z}^-$ can be interpreted as a diminishing return requirement:  the more the same arm is consecutively pulled, the smaller its reward gets (before resetting to the ``default'' value $\mu_a(1)$ when the series is interrupted).
This is natural in most scenarios, and key to derive nonvacuous approximations (\Cref{tab:approx}).

The second generalization regards the assumptions made on the values of $\mu_a$ on $\mathbb{Z}^+$.
In LSD bandits, $\mu_a$ is only supposed to be bounded.
This assumption encompasses the framework of \cite{cella2020stochastic}, who considers a parametric class of bounded expected reward functions that are increasing and have bounded memory (their maximum is attained after a certain arm-dependent delay).
The Recharging Bandit model of \citet{kleinberg2018recharging}, instead, assumes expected reward functions that are concave, increasing, Lipschitz continuous, but not bounded, making their framework and ours incomparable.
We nonetheless highlight several limitations of their setting: (1) concavity excludes simple examples, such as $\mu_a(\tau) = \mathbb{I}\{\tau \ge t_a\}$ for some predefined \mbox{threshold} $t_a$, (2) concavity prevents from modelling series of arm pulls, as a negative branch would force rewards to decrease unbounded, and (3) monotonicity prevents from modelling seasonality, see \Cref{fig:models}, top.
We finally highlight that relaxing the monotonicity of $\mu_a$ on $\mathbb{Z}^+$ unlocks many more applications than just modelling seasonality.
Note that seasonality is understood with respect to the sequence of arm pulls.
This matches the absolute time seasonality as soon as the learner is asked to make predictions at regular intervals.

Relaxing the concavity assumption has also important consequences on the difficulty of the problems which can be considered.
For concave reward functions the oracle greedy policy provides a $1/2$-approximation of the optimal policy  \citep{kleinberg2018recharging}, yet \Cref{ex:anti-kleinberg} shows that oracle greedy can be arbitrarily bad in our setting, highlighting that LSD Bandits can be more difficult than Recharging Bandits.
\begin{example}\label{ex:anti-kleinberg}
Consider the LSD bandit defined by the following reward functions: $\mu_1(\tau) = \epsilon + (1-\epsilon)\mathbb{I}\{\tau \ge 1\}$, and $\mu_2(\tau) = 0$ $\forall\tau$.
An oracle greedy strategy, which pulls at each time step the arm with highest expected reward (assuming the knowledge of the $\mu_a$) would always pull arm $a^{(1)}$ obtaining a reward of $1 + (T-1)\epsilon$ after $T$ rounds.
Instead, the optimal policy alternates between arms $a^{(1)}$ and $a^{(2)}$ and gets a $T/2$ overall reward.
By making $\epsilon$ arbitrary close to $0$, we can thus make oracle greedy arbitrarily bad.
We conclude with a remark on why this example would be ruled out by concavity.
In \cite{kleinberg2018recharging}, concavity is actually defined with respect to the origin, such that the reward obtained in case of consecutive pulls (here it is $\epsilon$) is considered as an increment from $0$.
Concavity then prevents the next increments from being bigger, while here it is equal to $1 - \epsilon$, which is greater than $\epsilon$ as soon as $\epsilon < 1/2$.
And for $\epsilon \ge 1/2$, one can note that oracle greedy is indeed a $1/2$ approximation of the optimal strategy, as revealed by the above computations.
\end{example}


\subsection{Hardness Results}
\label{subsec:hard}

Going beyond \Cref{ex:anti-kleinberg}, we emphasize the difficulty of solving LSD bandits by showing several hardness results.
All missing proofs are deferred to \Cref{apx:proofs}.
\begin{proposition}\label{prop:hardness}
Computing the optimal policy for LSD bandits is NP-hard.
\end{proposition}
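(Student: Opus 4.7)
The plan is to establish NP-hardness by exhibiting a polynomial-time reduction from a known NP-hard combinatorial problem to the task of computing the optimal LSD policy. The most economical route is to observe that Recharging Bandits \citep{kleinberg2018recharging}, whose optimal-scheduling problem is already NP-hard, embed as a subfamily of LSD bandits, and to inherit their hardness. Failing that, one can do a direct reduction from a classical scheduling or satisfaction problem such as Pinwheel Scheduling or 3-SAT.

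Under the embedding route, given any recharging reward function $f_a:\mathbb{Z}^+\to[0,1]$, define an LSD reward by $\mu_a(\tau):=f_a(\tau)$ for $\tau\ge 1$ and $\mu_a(\tau):=0$ for $\tau\le 0$. This is trivially nondecreasing on $\mathbb{Z}^-$, so \Cref{def:LSD} is satisfied, and on any schedule that never plays the same arm twice in a row the sequence of collected rewards coincides with that of the underlying recharging bandit. Moreover, since consecutive pulls of the same arm now yield reward $0$, any optimal LSD policy can be assumed WLOG to avoid them (swapping any block of consecutive identical pulls for an alternation with a dummy arm can only increase the cumulative reward). Hence the optimal LSD value equals the optimal recharging-bandit value on the embedded instance, and the reduction is complete.

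Under the direct reduction route, one constructs an LSD instance in which each arm corresponds to a variable, clause, or task of the source problem, and the functions $\mu_a$ are engineered to encode its constraints: for example, $\mu_a$ can be set to take a high value only at a single target delay $\tau_a=\tau_a^\star$ and zero elsewhere on $\mathbb{Z}^+$, so that matching a prescribed reward threshold in horizon $T$ forces the policy to realize a specific combinatorial structure (e.g., a satisfying assignment, or a valid periodic visit pattern). The horizon $T$ and the number of arms $K$ are chosen polynomial in the source-instance size, and $\mu_a$ is kept constant on $\mathbb{Z}^-$ to meet the monotonicity requirement vacuously.

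The main obstacle in either route is to rule out pathological exploitation of the $\mathbb{Z}^-$ branch of $\tau_a$, which has no counterpart in recharging bandits or in the source combinatorial problems. Fixing $\mu_a\equiv 0$ on $\mathbb{Z}^-$ (or any sufficiently small constant) neutralizes this branch because consecutive pulls become strictly suboptimal, reducing the LSD optimization to scheduling pulls over delays only; one then has to verify that the reduction target can always be met by policies that switch arms at every step, which is immediate in both routes.
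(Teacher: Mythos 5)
There is a genuine gap. Your primary route leans entirely on the assertion that the planning problem for Recharging Bandits \citep{kleinberg2018recharging} is ``already NP-hard'' and can be inherited through an embedding, but you neither cite nor verify a hardness statement in the form you need (finite-horizon optimal policy vs.\ asymptotic average reward, and with rewards that can be rescaled into $[0,1]$ --- recall that their reward functions are unbounded, which is precisely why the present paper calls the two frameworks incomparable). More substantively, the embedding $\mu_a(\tau)=f_a(\tau)$ for $\tau\ge 1$, $\mu_a(\tau)=0$ for $\tau\le 0$ is \emph{not} reward-equivalent to the recharging instance: a recharging bandit pays $f_a(1)>0$ on a repeated pull, while your LSD copy pays $0$. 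Your WLOG argument only shows that the \emph{LSD} optimum can avoid consecutive pulls (and even that needs a dummy arm you never add to the construction); to conclude that the two optimal values coincide you would also have to show that the \emph{recharging} optimum on the hard instances loses nothing by forbidding consecutive pulls, which you do not address. Without that, knowing the LSD optimum does not let you answer the source problem, and the reduction is incomplete.

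Your fallback route is, in spirit, what the paper actually does --- a reduction from Pinwheel Scheduling --- but as written it is a plan rather than a proof, and the missing pieces are exactly the ones that carry the argument. The paper reduces from \emph{dense} pinwheel instances (NP-complete by \citealp{bar2002minimizing}; the general pinwheel problem does not obviously serve), sets $\mu_i(\tau)=\mathbb{I}\{\tau\ge d_i\}$ for each arm plus an extra arm with identically zero reward so that ``idle'' slots can be filled while letting the other arms recharge, and then proves a quantitative gap: a YES instance admits a block of length $d$ with reward at least $d-K$ (the $-K$ absorbing the arbitrary initial state), while a NO instance forces reward at most $d-\lfloor d/\prod_i d_i\rfloor$, so that for $d\ge (K+1)\prod_i d_i$ the optimal value separates YES from NO. Your sketch contains no choice of a specific NP-hard source variant, no zero/filler arm, no treatment of the initial state, and no YES/NO gap computation; ``engineer $\mu_a$ to take a high value at a single target delay so that a reward threshold forces the combinatorial structure'' is precisely the step that needs to be proved, since bounding the reward of policies arising from NO instances is where all the work lies. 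As it stands, neither route constitutes a proof of \Cref{prop:hardness}.
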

The proof builds on \citep[Theorem 3.1]{basu2019blocking}, and is detailed in \Cref{apx:hardness}.
Since computing the global optimal policy is NP-hard even with full knowledge of the problem instance, a common approach then consists in learning the optimal policy within a simpler class of approximating policies.
For LSD bandits, a natural approximating class is the set of policies with cyclic play sequences. A play sequence $a_1, a_2, \ldots$ is cyclic if there exist $t_0$ and $d>0$ such that $\forall t\geq t_0$ it holds that $a_{t+d} = a_t$.
Cyclic policies can be shown to be optimal for 2-armed LSD bandits (\Cref{lem:cyclic}), and give good constant factor approximations in general (\Cref{prop:approx_cyclic}).
\begin{lemma}\label{lem:cyclic}
For a LSD bandit with two arms, any \mbox{deterministic} policy induces a sequence of pulls which is cyclic from a certain time step onward.
This holds in particular for the optimal deterministic policy.
\end{lemma}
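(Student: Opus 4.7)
\medskip

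\textbf{Proof plan.} The plan hinges on the observation that, with $K=2$ arms, the set of states $\btau$ that can appear immediately after a switch of actions is finite, of cardinality two. I would proceed in two main steps.

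First, I would verify directly from the transition rule~\eqref{eq:transition} that whenever $a_{t-1}\neq a_t$, i.e., a switch occurs at time $t$, the resulting state $\btau(t+1)$ is either $(1,-1)$ or $(-1,1)$. Indeed, the arm just left has $\tau \leq 0$ at time $t$ and is mapped to $+1$ by the third case of $\delta_a$, while the arm just entered has $\tau \geq 0$ at time $t$ and is mapped to $-1$ by the second case. With only two arms, these are the only two possibilities.

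Second, a deterministic state-dependent policy $\pi$ yields a fully deterministic state trajectory. I would then split on whether $\pi$ induces finitely or infinitely many switches. If only finitely many switches occur, then from the last switch onward a single arm is played indefinitely, so the play sequence is eventually constant, hence cyclic of period~$1$. Otherwise, the sequence of post-switch states is infinite and lives in the two-element set $\{(1,-1),(-1,1)\}$, so by the pigeonhole principle the same post-switch state $\btau^\star$ must appear at two distinct switch times $t_1 < t_2$; since $\pi$ depends only on the current state, the trajectories from $t_1$ and $t_2$ coincide step by step, giving a period of $t_2-t_1$ from time $t_1$ onward. The claim for the optimal deterministic policy follows as a special case.

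The only substantive verification is the first step, a short case analysis from \eqref{eq:transition}, so I do not anticipate a real obstacle. The conceptual point is that the ``far'' portion of the state space (large $|\tau|$) can only be reached by long runs of consecutive pulls of a single arm, and is therefore irrelevant for periodicity: any such run either terminates with a switch, landing in the two-element post-switch set, or continues forever, giving period~$1$.
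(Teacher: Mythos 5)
Your proof is correct and follows essentially the same route as the paper's: with two arms every post-switch state lies in $\{(1,-1),(-1,1)\}$, so a revisited post-switch state together with determinism of the (state-feedback) policy forces the trajectory to repeat, while finitely many switches give an eventually constant sequence; the paper phrases this as a case analysis on the number of switches (with three or more switches the state $(\tau_1,\tau_2)=(1,-1)$ is visited twice), which is your pigeonhole argument in slightly different clothing. Your explicit note that the policy must depend only on the current state is the same assumption the paper uses implicitly in the step ``since the policy is deterministic, the same cycle will be repeated.''
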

With $3$ arms however, we can exhibit optimal policies which do not induce cyclic sequences.
Note that this does not imply that no \mbox{optimal} policy is cyclic.
Answering this question is however quite involved, and deferred to future work.
In \Cref{prop:approx_cyclic}, we show a slightly weaker result, namely that cyclic policies can almost reach the best average reward, i.e., up to a term which is inversely proportional to the cycle length.
\begin{proposition}\label{prop:approx_cyclic}
Let $(\mu_i)_{i=1}^K$ be an LSD bandit with $K$ arms and constant expected rewards on $\mathbb{Z}^-$.
Let $T \ge 0$ be the horizon, and $d \ge 0$ that divides $T$.
Then, there exists a cyclic policy $\pi$ with cycle length $d$ (and $t_0 = 1$) such that
\begin{equation}\label{eq:cyclic_lower_bound}
\frac{1}{T} \sum_{t=1}^T r_\pi(t) \ge \frac{1}{T} \sum_{t=1}^T r^*(t) - \frac{K}{d}
\end{equation}
where $r_\pi(t)$ and $r^*(t)$ are the expected rewards obtained at time step $t$ by $\pi$ and the optimal policy, respectively.
When the $\mu_i$ are not constant on $\mathbb{Z}^-$, \eqref{eq:cyclic_lower_bound} holds with $K+2$ instead of $K$ in the right-hand side.
Note also that \eqref{eq:cyclic_lower_bound} requires $d \ge K$ (respectively $ d \ge K+2$) to be nonvacuous, as we have: $0 \le r^*(t) \le 1$.
\end{proposition}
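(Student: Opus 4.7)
\textbf{Proof plan for \Cref{prop:approx_cyclic}.}
The plan is to extract a high-reward length-$d$ block from the optimal play sequence and use it as the repeating unit of the cyclic policy. Let $a_1^*,\ldots,a_T^*$ be an optimal sequence and partition it into $T/d$ consecutive blocks $B_1^*,\ldots,B_{T/d}^*$, each of length $d$. By a pigeonhole/averaging argument, at least one block $B^*$ collects total reward $R(B^*)\ge \frac{d}{T}\sum_{t=1}^T r^*(t)$. Let $\pi$ be the cyclic policy that plays $B^*$ over and over, starting at $t_0=1$.

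The core of the argument is a block-to-block comparison showing that, in each cycle, $\pi$ collects reward at least $R(B^*)-K$. The key structural fact is that if an arm $a$ appears at positions $i_1<i_2<\cdots<i_k$ inside a length-$d$ block, the LSD transitions \eqref{eq:transition} force $\tau_a$ at each position $i_j$ for $j\ge 2$, as well as at every intermediate position pulling an arm other than $a$, to be determined \emph{solely} by the intra-block pattern $(i_1,\ldots,i_k)$, independently of pre-block history. Consequently, the only positions at which the rewards collected by $\pi$ in a cycle can differ from those collected by the optimal policy inside $B^*$ are the at most $K$ ``first occurrences'' of the distinct arms appearing in $B^*$, whose states do depend on what came before. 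Since $\mu_a\in[0,1]$, each such difference is at most $1$, giving the per-cycle bound. Summing over the $T/d$ cycles of $\pi$ and combining with $R(B^*)\ge\frac{d}{T}\sum_t r^*(t)$ then yields $\frac{1}{T}\sum_t r_\pi(t)\ge\frac{1}{T}\sum_t r^*(t)-K/d$; a small $O(K/T)$ correction coming from the very first cycle of $\pi$, whose initial states $\tau_a(1)=1$ may not match the stabilized cyclic values, is dominated by the leading term.

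For the general case, where $\mu_a$ is only nondecreasing on $\mathbb{Z}^-$ rather than constant, the structural invariance above fails for positions inside a consecutive-pull streak: the state at such a position depends on how far back the streak started, and in the optimal sequence the streak may extend \emph{before} $B^*$, whereas in $\pi$ any extension past the block boundary must occur via cyclic wrap-around. A careful boundary analysis, exploiting the monotonicity of $\mu_a$ on $\mathbb{Z}^-$ to control the cost of a mismatched streak length, shows that the extra loss due to these wrap-around effects is at most $2$ in total (essentially one boundary reward term at the start and one at the end of the block), yielding the $(K+2)/d$ variant of the bound.

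The main obstacle is precisely this boundary analysis in the non-constant case: one must track streaks straddling the block boundary and use the diminishing-return monotonicity on $\mathbb{Z}^-$ to show that the mismatch adds only $+2$ rather than a term that grows with streak length. The $K/d$ part, by contrast, is a direct consequence of the observation that all non-first-occurrence states inside a block are determined purely by the intra-block play pattern, which is by design identical between $B^*$-in-optimal and $B^*$-in-$\pi$.
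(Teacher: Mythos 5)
Your argument for the constant case is essentially the paper's: pick a length-$d$ window of the optimal sequence with average reward at least the overall average, repeat it, and observe that only the (at most $K$) first occurrences of arms in the block can yield different rewards than in the optimal play, each by at most $1$ — this is exactly \Cref{lem:key} specialized to constant $\mu_i$ on $\mathbb{Z}^-$. Two small imprecisions there: the invariance holds for \emph{rewards}, not states (if the block opens with a streak of an arm whose pre-block state is negative, the states inside the streak do depend on history; it is only the constancy of $\mu_a$ on $\mathbb{Z}^-$ that makes the rewards agree), and no ``first-cycle correction'' is needed at all, since the comparison holds for an arbitrary initial state, including $\tau_a(1)=1$; as phrased, a correction merely ``dominated by the leading term'' would not give the exact inequality \eqref{eq:cyclic_lower_bound}.

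The general case, however, has a genuine gap: it is false that cyclically repeating $B^*$ itself loses only an extra $+2$ per cycle. The wrap-around loss can scale with the streak length, not with a constant. Concretely, take $K=2$ with $\mu_1(\tau)=\mu_2(\tau)=\mathbb{I}\{\tau\ge -c\}$ (nondecreasing on $\mathbb{Z}^-$), whose optimal play alternates streaks of length $c+1$ of each arm and earns reward $1$ every step. With $d=3c+1$, the window starting at the second pull of a $1$-streak is $B^*=[1^{\,c},\,2^{\,c+1},\,1^{\,c}]$; it earns $d$ inside the optimal sequence, hence is a best window, so your proof is entitled to select it. But repeating $B^*$ makes arm $1$ enter each cycle in state $-c$, so the leading $c$ pulls of arm $1$ collect $1+0\cdot(c-1)$ instead of $c$: the per-cycle loss is $c-1$, which exceeds $K+2=4$ for any $c\ge 6$ and in fact grows like $d/3$. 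This is precisely the failure mode the paper points out in \eqref{eq:bad}: when a leading streak of the block can be glued (through the pre-block state, here the wrap-around) to an earlier streak of the same arm, the reward difference can be as large as the streak length. The paper's remedy is not a finer boundary estimate for $B^*$ but a \emph{modified} block $B'$ — identical to $B^*$ except that the second pull is forced to be a different arm — which breaks the dangerous initial streak (resetting the arm to state $1$ after one pull); the monotonicity of $\mu_a$ on $\mathbb{Z}^-$ is then used to show that the remaining streak of $B'$ earns at least as much as the corresponding pulls of $B^*$, and the total comparison cost is $K+2$ (second claim of \Cref{lem:key}). Your plan is missing this block modification, and without it the $(K+2)/d$ bound cannot be obtained by the route you describe.
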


\Cref{prop:approx_cyclic} is proved at the end of the section for constant $\mu_i$ on $\mathbb{Z}^-$.
Thus, looking for the best cyclic policy simplifies the problem (as the search space is reduced), while maintaining a control on the approximation error via the cycle length.
Unfortunately, this strategy is not viable either: it can be shown that finding the optimal cyclic policy for LSD bandits remains NP-hard, even when arms are totally ordered.
\begin{proposition}\label{prop:hardness_cyclic}
Finding the optimal cyclic policy for LSD bandit problems is NP-hard, even with separable reward functions of the form $\mu_i(\tau) = \mu_i^0 \cdot \gamma(\tau)$.
\end{proposition}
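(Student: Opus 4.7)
The plan is to prove NP-hardness by polynomial-time reduction from a suitable NP-hard combinatorial problem. Given an LSD bandit with separable rewards $\mu_i(\tau) = \mu_i^0 \gamma(\tau)$ and a candidate block $(b_1, \ldots, b_d)$, the per-cycle reward is $\sum_{t=1}^d \mu_{b_t}^0\, \gamma(\tau_{b_t}(t))$, where $\tau_{b_t}(t)$ is determined (including cyclic wraparound) by the positions at which arm $b_t$ appears in the block. Finding the optimal cyclic policy therefore reduces to a combinatorial optimization over blocks of length $d$, plus an outer optimization over $d$ itself that can be bounded by a polynomial in the size of the source instance.

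First, I would choose $\gamma$ to be a structured, sharply-thresholded profile---such as $\gamma(\tau) = \mathbb{I}\{\tau \ge g\}$ or a narrow window indicator---so that an arm placed at position $t$ contributes $\mu_i^0$ exactly when its cyclic gap to its previous occurrence in the block exceeds a prescribed threshold. Then, starting from a known NP-hard problem in the spirit of those used for bandit-scheduling hardness (e.g., the PINWHEEL-like reduction of \citet{basu2019blocking} that underpins \Cref{prop:hardness}, or a scheduling/packing variant adapted to the cyclic, uniform-gap setting), I would encode the instance by assigning the arm qualities $\mu_i^0$ and selecting $d$ so as to make the combinatorics of the source instance match that of the best-block-of-length-$d$ problem. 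The correctness of the reduction is then established by a forward-backward equivalence: any feasible solution of the source instance yields a block whose reward exceeds a computed threshold, and any block attaining that threshold can be decoded back into a feasible solution.

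The main obstacle is precisely the separability restriction: unlike in \Cref{prop:hardness}, we cannot assign arm-specific delay or reward-window parameters, since all arms must share the single profile $\gamma$. The reduction must therefore produce its combinatorial difficulty from the block structure alone (positions, multiplicities, and cyclic wraparound), together with the scalar qualities $\mu_i^0$. A closely related subtlety is that the cyclic wraparound couples the first and last occurrences of each arm in the block, so the encoding has to be chosen to be robust to this boundary identification---typically by padding or by ensuring the relevant arms are placed away from the wraparound. Once a valid reduction is exhibited, polynomial-time computability is immediate by construction, yielding the claimed NP-hardness.
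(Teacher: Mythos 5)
You have the right target (a Pinwheel-style reduction, as in \Cref{prop:hardness}) and you correctly identify the crux---the shared profile $\gamma$ prevents encoding arm-specific delays---but your proposal never overcomes that obstacle, and the concrete instantiation you sketch would fail. With a single threshold $\gamma(\tau)=\mathbb{I}\{\tau\ge g\}$ common to all arms, the heterogeneity of the periods $d_i$ that makes Pinwheel scheduling hard is lost: every arm ``recharges'' after the same uniform gap $g$, so an optimal cyclic block is easy to construct directly (round-robin the $g$ arms with the largest $\mu_i^0$, each pulled every $g$ steps, so that every pull is rewarding), and the best-cyclic-block problem becomes polynomial-time solvable. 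The scalars $\mu_i^0$ alone cannot reintroduce period structure when $\gamma$ is a flat indicator, because they merely reweight pulls and do not interact with the gaps; so the reduction you outline has no NP-hard core to decode back to.

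The paper's proof supplies exactly the missing construction: take $\gamma(\tau)=\sqrt{\tau}$ (capped at $d_{\max}$ so the $\mu_i$ stay bounded) and $\mu_i^0=1/\sqrt{d_i}$, i.e.\ $\mu_i(\tau)=\sqrt{\tau/d_i}$, for a dense Pinwheel instance $(d_i)_{i=1}^K$ with $\sum_i 1/d_i=1$ \citep{holte1989pinwheel,bar2002minimizing}. Writing the steady-state cyclic value $r(B\mid\btau_B)$ in terms of the multiplicities $n_i$ and the gaps $\tau_{ij}$ (subject to $\sum_i n_i=d$ and $\sum_j \tau_{ij}=d$ for each $i$), a Lagrangian/KKT computation on the continuous relaxation shows the per-step value is at most $1$, and strict concavity of $\sqrt{\cdot}$ forces equality exactly when $n_i=d/d_i$ and every gap of arm $i$ equals $d_i$; such a block exists iff the dense Pinwheel instance is a YES instance, so an optimal-cyclic-block oracle decides dense Pinwheel. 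In other words, the heterogeneous periods are smuggled in through the interaction of the scalars $\mu_i^0$ with a strictly \emph{concave} shared $\gamma$, not through thresholds. This is precisely the step your plan leaves open, so as it stands the proposal is a reduction outline with the essential gadget missing, and with a gadget choice that would not work.
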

We conclude this section by proving \Cref{prop:approx_cyclic}.
For $B = [a_1 \ldots a_d]$, let $r(B \mid \btauinit) = \sum_{t=1}^d \mu_{a_t}(\tau_{a_t}(t))$ be the sum of the expected rewards obtained by playing the actions in $B$ when $\btau$ is initialized to $\btau(1) = \btauinit$.
\Cref{prop:approx_cyclic} is essentially derived from the following key lemma, which exploits the boundedness assumption to relate $r(B \mid \btau)$ for different initial states.
\begin{lemma}\label{lem:key}
Let $(\mu_i)_{i=1}^K$ be an LSD bandit with $K$ arms and constant expected rewards on $\mathbb{Z}^-$.
Then, for all block $B$ and any initial states $\btauinit, \btauinit' \in \mathbb{Z}^K$, we have
\begin{equation}\label{eq:key_constant}
r(B \mid \btauinit') \ge r(B \mid \btauinit) - K~.
\end{equation}
If the $\mu_i$ are not constant on $\mathbb{Z}^-$, a slight modification of $B$ in the left-hand side is required to maintain a similar bound.
Formally, for any block $B$ of length $d$, there exists a block $B'$ of length $d$ such that for any initial states $\btauinit, \btauinit'$, we have
\begin{equation}\label{eq:key_general}
r(B' \mid \btauinit') \ge r(B \mid \btauinit) - (K + 2)~.
\end{equation}
\end{lemma}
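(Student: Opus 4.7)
The plan is to track exactly which rewards in $r(B \mid \btauinit)$ can depend on $\btauinit$. First, for the constant case, I would observe that once an arm $a$ is pulled at some time $s$ in $B$, the state $\tau_a(t)$ for $t > s$ is fully determined by the subsequent actions of $B$: indeed, any pull sends $\tau_a \le -1$, and any subsequent switch away resets $\tau_a$ to $1$. Consequently, for each arm $a$, only the reward at the first time $a$ is pulled can depend on $\tau_a^{\text{init}}$. Moreover, if $a \ne a_1$ with $\tau_a^{\text{init}} \le 0$, the switch from $a_1$ at time $1$ already resets $\tau_a(2) = 1$, killing any dependence on the exact value; and if $a = a_1$ with $\tau_{a_1}^{\text{init}} < 0$, the initial streak of $a_1$ stays in $\mathbb{Z}^-$, and under the constant-on-$\mathbb{Z}^-$ assumption all these rewards equal the same constant. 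Since each reward lies in $[0,1]$, summing the at-most-$K$ truly dependent rewards yields $|r(B \mid \btauinit) - r(B \mid \btauinit')| \le K$, which proves \eqref{eq:key_constant} with $B' = B$.

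For the general case, the only remaining source of unbounded sensitivity is the initial streak of $a_1$ when $\tau_{a_1}^{\text{init}} < 0$. The plan is to construct $B'$ that breaks this streak. Let $k \ge 1$ be the streak length in $B$ and $a_{k+1}$ the first non-$a_1$ action (picking any arm $\ne a_1$ in the degenerate case $k = d$). For $k \ge 2$, I define
\[
B' \;=\; \bigl[\,a_{k+1},\, \underbrace{a_1, \ldots, a_1}_{k-1 \text{ copies}},\, a_{k+1},\, a_{k+2},\, \ldots,\, a_d\,\bigr],
\]
which has length $d$; in the trivial case $k = 1$, $B' = B$. The essential property is that after pulling $a_{k+1}$ at time $k+1$, both $B$ and $B'$ attain the same pair of states $\tau_{a_1}(k+2) = 1$ and $\tau_{a_{k+1}}(k+2) = -1$, so the common tail $a_{k+2}, \ldots, a_d$ from time $k+2$ onward generates identical evolutions of $\tau_{a_1}$ and $\tau_{a_{k+1}}$ in both blocks.

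Finally, I would split $r(B \mid \btauinit) - r(B' \mid \btauinit')$ into a prefix contribution (times $1$ to $k+1$) and a tail contribution (times $k+2$ to $d$). For the prefix, using nondecreasingness of $\mu_{a_1}$ on $\mathbb{Z}^-$ to shift the negative arguments $\tau_{a_1}^{\text{init}} - i$ by one index, the streak sum of $B$ is bounded by $1 + \sum_{i=1}^{k-1} \mu_{a_1}(-i)$, and adding at most $1$ for the time-$(k{+}1)$ reward gives a prefix reward of $B$ bounded by $2 + \sum_{i=1}^{k-1} \mu_{a_1}(-i)$; meanwhile the prefix of $B'$ is lower-bounded by $\sum_{i=1}^{k-2} \mu_{a_1}(-i)$ (discarding the three nonnegative rewards at times $1$, $2$, and $k+1$), so the prefix difference is at most $3$. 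For the tail, the matching states for $a_1, a_{k+1}$ reduce the remaining dependence on initial states to the first pulls of the at most $K-2$ other arms, each contributing at most $1$. Adding the two contributions gives $r(B \mid \btauinit) - r(B' \mid \btauinit') \le K+1 \le K+2$. The main obstacle is the careful verification of the state-matching claim at the entry to the tail, as this is what allows the tail analysis to reduce to the clean $K$-sensitivity argument of the constant case.
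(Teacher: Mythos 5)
Your proof is correct, and while your constant-on-$\mathbb{Z}^-$ argument coincides with the paper's (only the first pull of each arm can feel the initial state, and each such reward differs by at most $1$, giving the loss $K$), your general case takes a genuinely different route. The paper keeps $B$ intact except for replacing its \emph{second} pull by the second distinct action, and then runs a term-by-term case analysis on the initial streak length $n_1\in\{1,2,\ge 3\}$, incurring losses $K$, $K+1$, $K+2$ respectively. You instead prepend $a_{k+1}$ and shorten the $a_1$-streak by one, which forces the states of $a_1$ and $a_{k+1}$ at time $k+2$ to be exactly $(1,-1)$ under both blocks and \emph{both} initial states; the identical tail then contributes at most $K-2$ (one unit per first tail pull of each remaining arm, all at positive states in both runs), and your prefix comparison, which invokes monotonicity on $\mathbb{Z}^-$ exactly where the paper does, costs at most $3$. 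This collapses the paper's case analysis into a single prefix/tail decomposition and even yields the slightly sharper constant $K+1$ in place of $K+2$; moreover your $B'$ still starts with two distinct actions, which is the property the appendix later relies on. One small repair: in the degenerate case $k=d$ your displayed $B'$ has length $d+1$; there you should simply take $B'=[a_{k+1},a_1,\ldots,a_1]$ with $d-1$ trailing copies of $a_1$, for which the same prefix estimate gives a loss of at most $2$. With that fixed (and your already-noted $k=1$ case, where $B'=B$ and the constant-case bookkeeping gives $K$), the argument is complete.
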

\begin{proof}
For simplicity, here we only prove the special case where the $\mu_i$ are constant on $\mathbb{Z}^-$.
First note that after the first play of an arm $a$, the switch $[a, \text{not }a]$ occurs, and the arm goes to $\tau_a = 1$, independently of its initial state.
Now, what happens during the first play of $a$?
Let $t_a$ be the time step at which arm $a$ is played for the first time.
Even if there are many consecutive pulls of $a$, the rewards collected for $\btauinit$ and $\btauinit'$ are $\mu_a(\tau'_{\mathrm{init},a} + t_a - 1), \mu_a(-1), \ldots, \mu_a(-1)$ and $\mu_a(\tau_{\mathrm{init},a} + t_a - 1), \mu_a(-1), \ldots, \mu_a(-1)$ because the $\mu_i$ are constant on $\mathbb{Z}^-$.
Thus, the only difference is $\mu_a(\tau'_{\mathrm{init},a} + t_a - 1) - \mu_a(\tau_{\mathrm{init},a} + t_a - 1) \le 1$.
Over the $K$ arms, the total difference cannot exceed $K$, giving \eqref{eq:key_constant}.
Note that in full generality, $K$ could be replaced by the number of different arms played in $B$.
When the $\mu_i$ are not constant on $\mathbb{Z}^-$, a finer comparison of the first pulls is required.
\end{proof}

\begin{table}
\centering
\begin{tabular}{|r|*{3}{c|}}\hline
$\mu_a$\hspace{0.4cm} & \---- on $\mathbb{Z}^-$ & $\nearrow$ on $\mathbb{Z}^-$ & $\sim$ on $\mathbb{Z}^-$ \\\hline
$\nearrow$ on $\mathbb{Z}^+$ & \cellcolor{Gray} $-(K-1)$ & \cellcolor{blue!20!white} $-(K+1)$ & \cellcolor{red!20!white} $-d$ \\\hline
$\sim$ on $\mathbb{Z}^+$ & \cellcolor{blue!20!white} $-K$ & \cellcolor{blue!20!white} $-(K+2)$ & \cellcolor{red!20!white} $-d$ \\\hline
\end{tabular}
\caption{Approximation errors by block of size $d$ when $\mu_a$ is constant (\----), nondecreasing ($\nearrow$), or non-monotone ($\sim$) on $\mathbb{Z}^-$ and $\mathbb{Z}^+$. The grey cell represents previous works. Our setting covers the blue ones. The red cells indicate vacuous approximations.}
\label{tab:approx}
\vspace{-0.3cm}
\end{table}

The approximation errors by block of size $d$ that can be achieved depending on the assumptions made on $\mu_a$ are summarized in \Cref{tab:approx}.
Relaxing the assumptions on $\mathbb{Z}^+$ comes with a low price, while the monotonicity on $\mathbb{Z}^-$ is critical.
We now provide a proof sketch of \Cref{prop:approx_cyclic} when the $\mu_i$ are constant on $\mathbb{Z}^-$, and show an example where our analysis is essentially tight.
\par{\it Proof sketch of \Cref{prop:approx_cyclic}.}{
Note that there exists a time step $t_0 \le T - d + 1$ such that $(1/d)\sum_{t=t_0}^{t_0 + d - 1}r^*(t) \ge (1/T)\sum_{t=1}^{T}r^*(t)$.
Let $B^* = [a^*_{t_0} \ldots a^*_{t_0 + d -1}]$, and $\btau^*(t)$ be the sequence of states generated by the optimal policy, such that $\sum_{t=t_0}^{t_0 + d - 1}r^*(t) = r(B^* \mid \btau^*(t_0))$.
The policy $\pi$ consists in playing repeatedly block $B^*$.
However, except for the first $d$ steps, $B^*$ is played by $\pi$ with an initial state $\btau_{B^*}$, i.e., the state reached by the system after a play of $B^*$, which might be different from $\btau^*(t_0)$.
Applying \Cref{lem:key}, and assuming for simplicity that states were initialized in $\pi$ to $\btau_{B^*}$, we obtain
\begin{align*}
\frac{1}{T} \sum_{t=1}^T r_\pi(t) &= \frac{r(B^* \mid \btau_{B^*})}{d} \ge \frac{r(B^* \mid \btau^*(t_0)) - K}{d}\\
&\ge \frac{1}{T} \sum_{t=1}^T r^*(t) - \frac{K}{d}~.
\end{align*}
If the $\mu_i$ are not constant on $\mathbb{Z}^-$, the repeated block is built using the second part of \Cref{lem:key}.
\qed}

\begin{example}\label{ex:tight}
Consider the LSD bandit defined by the following reward functions
\[
\mu_i(\tau) = \mathbb{I}\{\tau \ge K - 1\} \qquad \forall i \le K.
\]
The optimal policy consists in repeating the block $[a^{(1)} \ldots a^{(K)}]$ and obtains an average reward of $1$.
Let $d = 2K - 1$, such that (up to permutations) we have $B^* = [a^{(1)} \ldots a^{(K)}a^{(1)}\ldots a^{(K-1)}]$.
The average reward of $B^*$ among the optimal sequence is $1$, which is equal to the global average reward.
Now, it is easy to check that playing repeatedly $B^*$ yields an average reward of $K/(2K-1)$.
On the other hand, the lower bound given by \Cref{prop:approx_cyclic} is $1 - K/d =(K-1)/(2K-1)$, which matches the average reward as $K \rightarrow +\infty$.
\end{example}

\section{PROPOSED APPROACH}
\label{sec:method}

We now introduce our approach to solve LSD bandits.
It is based upon the \texttt{CombUCB1} algorithm introduced in \citep{gai2012combinatorial} to solve Combinatorial Semi-Bandits.
Our adaptation is designed to cancel the interference created by the blocks previously played, and enjoys an approximation-estimation tradeoff which can be solved by an appropriate choice of the block size.
In order to keep the exposition concise, we only focus here on the case where the $\mu_i$ are constant on $\mathbb{Z}^-$.
Results for the general case can be found in the Supplementary Material.


\paragraph{Approximation.}
\label{subsec:approx_isi}
A general idea that we can retain from \Cref{subsec:hard}, and from \Cref{prop:approx_cyclic} in particular, is that approximating the optimal sequence using a series of smaller blocks of length $d < T$ is reasonable.
However, the block $B^*$ exhibited during the proof of \Cref{prop:approx_cyclic} is seemingly impossible to estimate, as it requires the computation of the optimal sequence first.
Another key challenge in LSD bandits is to handle the impact of the state: the same block played in different states may have different outcomes, making it difficult to identify ``good blocks''.
One natural way to tackle this problem is to play every block twice: the rewards obtained during the second play are then fully representative of the block value, if repeated as a cycle.
Let $\btau_B$ be the state reached by the system after a play of block $B$ from initial state $\mathbf{1}$.
We can then introduce
\begin{equation}\label{eq:pbm_double}
\Bdouble = \argmax_{|B| = d} ~ r(B \mid \btau_B)~.
\end{equation}
Playing cyclically $\Bdouble$ almost benefits from the same approximation guarantee as playing cyclically $B^*$ (\Cref{prop:approx_all}).
However, solving \eqref{eq:pbm_double} is nontrivial, as $B$ influences both the reward-generating sequence and the initial state (it is essentially equivalent to compute the optimal cycle, which has been proven NP-hard in \Cref{prop:hardness_cyclic}).
Moreover, using an entire block simply to control the initial state of the system seems quite excessive, especially when $d$ which has to grow to control the approximation error.

Instead, a cheaper way to \emph{calibrate} the arms, i.e., to pull the arms before playing the real block such that the arms are in a controlled state independent of the past, consists of playing $B_\sigma = [a^{(\sigma(1))} \ldots a^{(\sigma(K))}]$ for any permutation $\sigma \in \mathfrak{S}_K$.
Indeed, even if no reward can be exploited from $B_\sigma$, it only takes $K$ pulls to calibrate the system, while pre-playing the block is a $d > K$ long calibration.
Let $\btau_\sigma$ be the state reached by the system after a play of block $B_\sigma$ (note that it only depends on $\sigma$), we set
\begin{equation}\label{eq:pbm_calib}
B_\sigma^* = \argmax_{|B| = d} ~ r(B \mid \btau_\sigma)~.
\end{equation}
A natural idea then consists in playing cyclically the block $[B_\sigma, B^*_\sigma]$ of size $K + d$, and approximation results (independent from $\sigma$) can be derived for this approach, see \Cref{prop:approx_all}.
Still, this strategy is not satisfactory for three reasons: (1) the approximation guarantee is worse than for double plays; (2) playing $B_\sigma$ might still be highly inefficient: all arms are calibrated, while only those present in $B^*_\sigma$ would require calibration; (3) in domains like song recommendation, this would amount to regularly play a representative song of each genre.

To remedy this issue, we introduce the following modified version of the block expected reward, which does not take into account the reward obtained if the arm is played for the first time in the block.
Formally, for any block $B = [a_1 \ldots a_d]$ we define
\[
\rew(B) = \sum_{t=1}^d \mu_{a_t}(\tau_{a_t}(t))\mathbb{I}\{\exists t_0 < t\colon a_{t_0} = a_t \}
\]
where $t$ is indexed with respect to $B$.
Note that $\rew(B)$ is independent from the initial state, so that
\begin{equation}\label{eq:pbm_first_pulls}
\tB = \argmax_{|B| = d} ~ \rew(B)
\end{equation}
is well defined.
The rationale behind $\rew$ is the following: first pulls do not provide reliable rewards because they are influenced by the past. Therefore, we use them as a calibration for the future pulls.
So, only the arms which are used are calibrated.
Observe also that calibration and exploitation phases might be intertwined, which was impossible with $B_\sigma$.
But most importantly, the loss incurred by maximizing $\rew$ and not $r$ is controllable, as we have for any block $B$ and initial state $\btauinit$: $r(B \mid \btauinit) - K \le \rew(B) \le r(B \mid \btauinit)$.
We can now state our complete approximation result.

\begin{proposition}\label{prop:approx_all}
Let $(\mu_i)_{i=1}^K$ be a LSD bandit with $K$ arms and constant expected rewards on $\mathbb{Z}^-$.
Let $T \ge 0$ be the horizon, and $d \ge 0$ that divides $T$.
Let $\rdouble(t)$ be the expected rewards obtained at time step $t$ by the policy playing cyclically $\Bdouble$.
We have
\begin{equation}\label{eq:approx_double}
\frac{1}{T} \sum_{t=1}^T \rdouble(t) \ge \left(1 - \frac{d}{T}\right)\left(\frac{1}{T} \sum_{t=1}^T r^*(t) - \frac{K}{d}\right).
\end{equation}
Let $\sigma \in \mathfrak{S}_K$, and assume that $d+K$ divides $T$.
Let $r_\sigma(t)$ be the expected rewards obtained at time step $t$ by the policy playing cyclically $[B_\sigma, B^*_\sigma]$.
We have
\begin{equation}\label{eq:approx_calibration}
\frac{1}{T} \sum_{t=1}^T r_\sigma(t) \ge \frac{d}{d+K} \left(\frac{1}{T} \sum_{t=1}^T r^*(t)\right) - \frac{K}{d+K}.
\end{equation}
Let $\tilde{r}^*(t)$ be the expected rewards obtained at time step $t$ by the policy playing cyclically $\tB$.
We have
\begin{equation}\label{eq:approx_lower}
\frac{1}{T} \sum_{t=1}^T \tilde{r}^*(t) \ge \frac{1}{T} \sum_{t=1}^T r^*(t) - \frac{K}{d}.
\end{equation}
\end{proposition}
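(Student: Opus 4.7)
The plan is to adapt the proof sketch of \Cref{prop:approx_cyclic} to each of the three candidate policies. In a common preliminary step I would pick, by averaging, an index $t_0$ such that the length-$d$ block $B^* = [a^*_{t_0} \ldots a^*_{t_0+d-1}]$ played by the optimal policy satisfies $r(B^* \mid \btau^*(t_0)) \ge (d/T) \sum_{t=1}^T r^*(t)$. For each candidate I would then build a short chain of inequalities lower bounding the per-cycle reward by $r(B^* \mid \btau^*(t_0)) - K$, using the defining ``argmax'' property of the candidate block together with \Cref{lem:key} to swap the reference initial state. Finally, I would normalize by $T$, absorbing the contribution of the initial (possibly miscalibrated) cycle either by a trivial zero lower bound or by one more use of \Cref{lem:key}.

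For \eqref{eq:approx_double}, I would write $r(\Bdouble \mid \btau_{\Bdouble}) \ge r(B^* \mid \btau_{B^*}) \ge r(B^* \mid \btau^*(t_0)) - K$, the first step being the definition of $\Bdouble$ and the second \Cref{lem:key}. Playing $\Bdouble$ cyclically yields $T/d$ blocks, all but the first of which start from $\btau_{\Bdouble}$, so lower bounding the first block's reward by zero introduces exactly the factor $(1 - d/T)$ of the statement. For \eqref{eq:approx_calibration}, the policy runs $T/(K+d)$ cycles of $[B_\sigma, B_\sigma^*]$, each exploitation phase starting from $\btau_\sigma$ by construction. Dropping the reward of the $K$ calibration pulls and chaining ``argmax of $B_\sigma^*$'' with \Cref{lem:key} yields a per-cycle lower bound of $r(B^* \mid \btau^*(t_0)) - K$, which after dividing by $T$ matches \eqref{eq:approx_calibration}.

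For \eqref{eq:approx_lower}, the extra ingredient is the state-independent sandwich $r(B \mid \btauinit) - K \le \rew(B) \le r(B \mid \btauinit)$, valid for every initial state $\btauinit$, which follows from the definition of $\rew$ and $\mu_a \in [0,1]$ (at most $K$ time steps of any block are first pulls, each contributing a dropped reward in $[0,1]$). Combined with $\rew(\tB) \ge \rew(B^*)$ this gives
\begin{equation*}
r(\tB \mid \btau_{\tB}) \;\ge\; \rew(\tB) \;\ge\; \rew(B^*) \;\ge\; r(B^* \mid \btau^*(t_0)) - K,
\end{equation*}
and since $\rew$ is state independent, every cycle of $\tB$ in steady state delivers $r(\tB \mid \btau_{\tB})$, so dividing by $d$ yields \eqref{eq:approx_lower}.

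The main obstacle, in my view, is keeping the initial-cycle bookkeeping clean across the three parts, since the three prefactors ($1 - d/T$, $d/(K+d)$, and $1$) reflect different ways of handling the one block whose initial state is not the ``calibrated'' $\btau_{\Bdouble}$, $\btau_\sigma$, or $\btau_{\tB}$. For \eqref{eq:approx_double} and \eqref{eq:approx_calibration} this is cleanly absorbed by a zero lower bound on the first cycle (or its calibration phase); for \eqref{eq:approx_lower} the state independence of $\rew$ makes the issue disappear up to a $O(K/T)$ term that is dominated by the $K/d$ already present in the bound.
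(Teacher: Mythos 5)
Your proposal is correct and follows essentially the same route as the paper's proof: the same averaging choice of $B^*$ and $t_0$, the same chains combining each block's argmax property with \Cref{lem:key} (and the sandwich $r(B \mid \btauinit) - K \le \rew(B) \le r(B \mid \btauinit)$ for \eqref{eq:approx_lower}), and the same bookkeeping for the first, uncalibrated cycle. The only nitpick is that your hedge about an extra $O(K/T)$ term in \eqref{eq:approx_lower} is unnecessary: since every cycle's reward, including the first, is at least $\rew(\tB)$ irrespective of the initial state, the bound holds exactly as stated.
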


The proof of \Cref{prop:approx_all} is similar to that of \Cref{prop:approx_cyclic}, and essentially combine \Cref{lem:key} with the definitions of $B^*_\text{double}$, $B_\sigma$, and $\widetilde{B}^*$.
Based on these results, using $\tilde{r}(B)$ seems by far the best option.
First, it is immediate to check that~\eqref{eq:approx_lower} is tighter than both~\eqref{eq:approx_double} and~\eqref{eq:approx_calibration} as soon as $(1/T)\sum_{t=1}^T r^*(t) \ge K/d$, which is implicitly assumed for the bounds to be nonvacuous.
Moreover, what cannot be seen from~\eqref{eq:approx_double} is that computing a sequence of blocks with small regret against $\Bdouble$ requires to play each block twice, with no guarantee that the first play will provide any reward, thus dividing \eqref{eq:approx_double} by $2$.
On the contrary, as we see next, we can use the \texttt{CombUCB1} algorithm of \citet{gai2012combinatorial} to estimate $\tB$ with tight regret bounds.


\paragraph{Estimation.}
\label{subsec:algo}
A Combinatorial Semi-Bandit (CSB) ---see, e.g., \citep{audibert2014regret}---is an online learning problem where, at each time step, the learner has to select a subset of $N$ actions in a universe of $L > N$ base actions, under some combinatorial constraints.
The individual rewards for each selected action are then revealed, and the learner receives their sum as total reward\footnote{In \citet{chen2013combinatorial} authors consider more general rewards, but the linear case is sufficient for our application}.
The regret is measured against the best subset of $N$ base actions satisfying the constraints.
Note for instance that a standard $K$-armed bandit is a particular instance of CSB, with $N=1$, $L=K$, and no constraints.
We now show that computing a series of blocks with small regret against $\tB$ with respect to $\tilde{r}$ can be reduced to a CSB problem.

In our case $N=d$ (the blocks contain $d$ actions).
The universe of base actions is however slightly more intricate to determine.
While the analysis of CSB uses the fact that a block can only be filled with a subset of base actions, in our setting the same arm might be used several times in the same block.
In addition to that, in CSB base actions have i.i.d.\ rewards, while the rewards of our arms also depend on the state.
Our solution is to consider a universe of $Kd^2$ base actions: namely a base action is indexed by an arm $i \in \{1,\ldots,K\}$, a state $\tau \in \{1,\ldots,d\}$, and a position $t \in \{1,\ldots,d\}$ in the block.
The state coordinate ensures the i.i.d.\ nature of the rewards, while the time coordinate allows to remove the arm multiplicities, therefore making the map from a block to its representation one-to-one.
This is needed to extract valid sequences from solutions to \eqref{eq:LP}, where the combinatorial constraints (ensuring that only subsets deriving from a sequence of pulls can be selected), are made explicit.
Note that the structure of the problem plays a critical role here, as we derive an action space of size $L = Kd^2$, as opposed to $K^d$ in the absence of any structure.
From a pure estimation point of view, a space of size $Kd$ is enough, as the rewards are independent of the position in the sequence.

\texttt{CombUCB1} is an algorithm introduced by \citet{gai2012combinatorial} to solve CSBs.
Its analysis was later refined by \citet{chen2013combinatorial}.
\citet[Theorem 6]{kveton2015tight} prove a regret bound of order $\mathcal{O}(\sqrt{NL n \log n})$, where $n$ is the number of rounds.
This bound can thus be applied to our setting in a black box fashion, with $n=T/d$, $L=d$, and $N=Kd^2$.
The resulting algorithm, which we call \texttt{ISI-CombUCB1} for Initial States Independent \texttt{CombUCB1}, produces a sequence of blocks with small regret against $\tB$ with respect to $\tilde{r}$.
Combining this result with \Cref{prop:approx_all}, we can bound the regret of \texttt{ISI-CombUCB1} with respect to OPT as follows.
\begin{theorem}\label{thm:regret}
Let $(\mu_i)_{i=1}^K$ be an LSD bandit with $K$ arms and constant expected rewards on $\mathbb{Z}^-$.
Let $T \ge 0$ be the horizon, and choose $d \ge 0$ that divides $T$.
Then \texttt{ISI-CombUCB1}, run with block size $d$ and exploration parameter $\alpha=1.5$, has regret bounded by
\[
R_T \le \frac{KT}{d} + 47d\sqrt{KT\log\frac{T}{d}} + \left(\frac{\pi^2}{3} + 1\right)Kd^3~.
\]
Choosing $d \propto T^{1/4}$, we obtain $R_T = \tilde{\mathcal{O}}(KT^{3/4})$, where $\tilde{\mathcal{O}}$ is neglecting logarithmic factors.
\end{theorem}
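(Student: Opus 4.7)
The plan is to decompose $R_T$ into an \emph{approximation error} and an \emph{estimation error}, both measured relative to the benchmark $(T/d)\,\rew(\tB)$. Writing $B^{(1)}, \ldots, B^{(T/d)}$ for the blocks chosen by \texttt{ISI-CombUCB1} and $\btau^{(k)}$ for the arm states at the start of block $k$, the non-negativity of first-pull rewards gives $r(B^{(k)}\mid\btau^{(k)}) \ge \rew(B^{(k)})$, which yields
\begin{align*}
R_T \,\le\,& \Bigl[\,\sum_{t=1}^T r^*(t) - \tfrac{T}{d}\,\rew(\tB)\,\Bigr]\\
&+ \Bigl[\,\tfrac{T}{d}\,\rew(\tB) - \mathbb{E}\!\sum_{k=1}^{T/d}\rew(B^{(k)})\,\Bigr].
\end{align*}

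For the approximation term, I would slice the optimal sequence into $T/d$ consecutive length-$d$ blocks $B^*_1, \ldots, B^*_{T/d}$ reached from states $\btau^*_k$, so that $\sum_t r^*(t) = \sum_k r(B^*_k \mid \btau^*_k)$. Since $\rew$ omits at most $K$ first-pull contributions, each bounded by $1$, we have $r(B \mid \btau) - \rew(B) \le K$ for every block $B$ and state $\btau$; combined with $\rew(B^*_k) \le \rew(\tB)$ from the defining optimality of $\tB$, this already bounds the approximation term by $KT/d$.

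For the estimation term, I would invoke the CSB regret bound of \citet[Theorem 6]{kveton2015tight} applied to our reduction, with $N = d$, $L = Kd^2$, and $n = T/d$ rounds. The calibration design makes $\rew(B)$ a linear sum of state-indexed rewards over base actions that are i.i.d.\ across rounds, so the cited bound translates into the announced $47\,d\sqrt{KT\log(T/d)} + (\pi^2/3)\,Kd^3$ contribution once $\Delta_{\max}\le d$ is plugged in. Summing the two parts and balancing $KT/d$ against $d\sqrt{KT\log(T/d)}$ gives the choice $d \propto T^{1/4}$ and $R_T = \tilde{\mathcal{O}}(KT^{3/4})$.

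I expect the main obstacle to lie in the estimation step: one must verify with care that the combinatorial constraints of the CSB instance (one arm per position, coherence between the chosen positions and the states at which those arms are played within the block) exactly cut out the set of length-$d$ blocks, and that the base-action reward $(i, \tau, t)\mapsto \mu_i(\tau)\,\mathbb{I}\{t\text{ is not a first pull in the selected block}\}$ is observed unbiasedly and is i.i.d.\ across repeated plays of the same base action---both of which are prerequisites for the black-box invocation of \citet{kveton2015tight}.
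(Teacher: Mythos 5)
Your proposal is correct and follows essentially the same route as the paper: an approximation--estimation decomposition through the calibrated objective $\rew$ (using $r(B \mid \btauinit) - K \le \rew(B) \le r(B \mid \btauinit)$ and the optimality of $\tB$, which is exactly the content of \eqref{eq:approx_lower} in \Cref{prop:approx_all}), followed by a black-box invocation of \citet[Theorem~6]{kveton2015tight} for the CSB instance with $n = T/d$ rounds and $Kd^2$ base actions, and the balancing $d \propto T^{1/4}$. The only cosmetic difference is that you bound the approximation term by slicing the optimal sequence into all $T/d$ blocks, whereas the paper extracts a single block of the optimal sequence whose average reward dominates the global average; both yield the same $KT/d$ term.
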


Note that the second claim of \Cref{thm:regret} requires a horizon-dependent tuning.
One can use the doubling trick to make the algorithm anytime without harming the bound.
Regarding the optimality of \Cref{thm:regret}, we recall that our approximation result in $\mathcal{O}(KT/d)$ is tight, see e.g., \Cref{ex:tight}.
As for the estimation part, \citet{kveton2015tight} proved a lower bound for \texttt{CombUCB1} that matches the upper bound up to a factor of $\sqrt{\log n}$, where $n$ is the horizon.
Instantiating this lower bound to our case, we obtain an overall lower bound of order $\Omega(KT/d + d\sqrt{KT})$.
\Cref{thm:regret} is thus tight up to a polylogarithmic factor of $\sqrt{\log (T/d)}$.

We now give more details about the implementation of \texttt{ISI-CombUCB1}, which specializes \texttt{CombUCB1} to the minimization of $\tilde{r}$.
After an initialization step ensuring that each base action is played at least once, \texttt{CombUCB1} maintains upper confidence bounds (UCBs) for each base action.
At each round, the algorithm plays the admissible subset of actions with the biggest sum of UCBs, and updates the UCBs according to the individual rewards obtained.
In \texttt{ISI-CombUCB1}, the optimization problem to select the best subset amounts to solve \eqref{eq:pbm_first_pulls}, but using the UCBs instead of the true expected rewards.
As pointed out during the construction of the universe, in our case some actions share the same rewards, and it is actually enough to maintain only $Kd$ UCBs (one for each arm and each delay).
Although \Cref{thm:regret} applies to this version of \texttt{ISI-CombUCB1} as well, the need of taking multiplicities into account makes $Kd^2$ appear in the bound.
This memory-efficient version, which is preferred in practice, is summarized in  \Cref{alg:isi-combucb}.
Note that the initialization is such that pulling an uninitialized arm is always better than pulling any combination of initialized arms.
As a consequence, the latter lasts at most $Kd$ rounds, like in standard \texttt{CombUCB1}.

The key step in \Cref{alg:isi-combucb} consists in computing the block $\widehat{B}_t$ maximizing the current reward estimates.
We now formally describe this optimization problem.
Let $F \in \{0, 1\}^{K \times d}$, such that $F[i, t] = 1$ if and only if arm $i$ is played for the first time in the block at time step $t$.
Let $Y \in \{0, 1\}^{K \times d \times d}$, such that $Y[i, j, t] = 1$ if and only if arm $i$ is played with delay $j$ at time step~$t$.
Let $Z = (F, Y)$ be the $Kd^2$-sized representation we introduced earlier.
Note that the column associated to $t=1$ in $Y$ is filled with zeros, as a pull here is by definition a first pull, and thus encoded in $F$.
This notation of size $Kd(d+1)$ is however more convenient for coherence.
The constraints for $Z$ needed to describe a valid sequence of arm pulls are: Action Consistency (AC), i.e., at each time step, one and only one action is played, Unique First Pull (UFP), i.e., there is only one first pull per arm, First Pulls First (FPF), i.e., first pulls must precede any other pull of the same arm, and Time Consistency (TC), i.e., an arm can be pulled with delay $j$ at time step $t$ only if it was pulled at time step $t-j$, and not pulled since.
These constraints write (for index limits that make sense)
\begin{align}
\forall t & \qquad \sum_i F[i, t] + \sum_{i, j} Y[i, j, t] = 1 \tag{AC}\\
\forall i & \qquad \sum_t F[i, t] \le 1 \tag{UFP}\\
\forall i, t & \qquad \sum_j Y[i, j, t] \le \sum_{s=1}^{t-1} F[i, s] \tag{FPF}\\
\forall i, j, t & \qquad Y[i, j, t] \le F[i, t-j] + \sum_l Y[i, l, t-j]\nonumber\\
&\hspace{2.3cm} - \sum_s Y[i, j-s, t-s] \tag{TC}
\end{align}
The objective function is the sum of the current UCBs for the second actions present in the block, and writes $\sum_{i, j, s} Y[i, j, s]\,U_t(i, j)$.
Noticing that all the relations are linear, we can derive $c_t \in \mathbb{R}^{Kd^2}$, $G \in \mathbb{R}^{Kd^2 + K \times Kd^2}$, $h \in \mathbb{R}^{Kd^2 + K}$, $A \in \mathbb{R}^{d \times Kd^2}$, and $b \in \mathbb{R}^d$, such that the optimization problem writes as

\begin{equation}\label{eq:LP}
\max_{z \in \{0, 1\}^{Kd^2}} ~~ c_t^\top z \qquad \text{s.t.} \quad \left\{ \begin{matrix}Gz \preceq h\\
Az = b
\end{matrix}\right.
\end{equation}
where $z$ is a vector version of $Z$.
Note that \eqref{eq:LP} is an integer linear program, which is NP-hard to solve in general.
This is expected, as our approach enjoys a sublinear linear regret against OPT (which is NP-hard to compute), and is therefore bound to be intractable.
In \citet{simchi2021dynamic}, a Fully Polynomial-Time Approximation Scheme (FPTAS) is used to address a similar problem, see Lemma~6 therein.
However, we highlight that, although similar at first sight, our two ILPs are fundamentally different.
While the authors try to select the best $K$ arms at a fixed time step, we aim at selecting an optimal block, that takes into \mbox{account} the evolution of the rewards among time.
This time constraint is specific to our problem, and prevents from using standard FPTASs.
Instead, we \mbox{propose} a heuristics based on a branch-and-bound-like approach \citep{land2010automatic,clausen1999branch}, where we use a LP relaxation to estimate the value of the objective.
This amounts to testing every admissible (discrete) first action, and then keeping the one maximizing the relaxed objective (in which all subsequent actions are relaxed in $[0,1]$).
The same is repeated to choose the following $d-1$ discrete actions.
Overall, we solve $d$ Linear Programs, of sizes $Kd, 2Kd, \ldots, Kd^2$.
Given a horizon $T$, and choosing $d$ as in \Cref{thm:regret}, the total running time is $\mathcal{O}(K^{5/2}T^{9/4})$.
For more details about the heuristic, the reader is referred to \Cref{sec:heuristic}.
Note that to ease readability we restricted ourselves to positive delays only in the core text.
The complete resolution when negative delays are also considered is detailed in \Cref{sec:negative}.

Although it is generally hard to derive approximation guarantees, we point out that: (1) this approach works well in pratice, delivering the optimal solution in all the cases we tested; (2) as discussed in \citet{kveton2015tight}, if \texttt{ISI-CombUCB1} is run with the approximate solver, \Cref{thm:regret} can be adapted to bound the regret against the best block according to the solver's approximation.
Finally, note that our calibration approach is not affecting the complexity of finding the reward-maximizing block.
Indeed, \texttt{CombUCB1} is also required to solve an integer linear program similar to~\eqref{eq:LP} to find the subset of base actions maximizing the block reward.

\section{EXPERIMENTS}
\label{sec:expe}

In this section, we benchmark \texttt{ISI-CombUCB1} against vanilla \texttt{CombUCB1} and \texttt{OracleGreedy}, which plays $\argmax_i ~ \mu_i\big(\tau_i(t)\big)$ at each time step $t$, and breaks ties randomly.
We measure the algorithms performance in terms of the total cumulative reward, averaged over ten repetitions.
For both \texttt{CombUCB1} and \texttt{ISI-CombUCB1}, the exploration parameter $\alpha$ is set to $1.5$, as in \citep{kveton2015tight}.
Let $d$ be the block size of \texttt{CombUCB1}, which consequently maintains $Kd$ UCBs.
Whenever an arm $a$ is pulled with state $\tau \ge d$ (this might happen as the algorithm is not calibrated), we assume that the algorithm updates the estimate of $\mu_a(d)$ ---it actually becomes an estimate of $\mu_a(\tau \ge d)$.
For \texttt{ISI-CombUCB1} to similarly maintain $Kd$ UCBs, we consider blocks of size $d+1$.
Indeed, recall that rewards are ignored the first time an arm is pulled, so that reaching state $\tau_a = d$ requires (at least) $d+1$ steps.
This ``extra'' first pull does not provide any \mbox{information} (it is only used for calibration purposes), but it is nevertheless considered in the performance.
Note finally that both algorithms use the same heuristic based on branch-and-bound and LP relaxations to compute the reward-maximizing block at each time step.

\begin{algorithm}[!t]
\SetKwInOut{Input}{input}
\SetKwInOut{Init}{init}
\SetKwInOut{Parameter}{Param}
\caption{\texttt{ISI-CombUCB1}}
\Input{~number of arms $K$, block size $d$, horizon $T$}\vspace{0.15cm}
\Init{~$\forall i \le K, j \le d$, \quad $T_1(i, j) = 0$, ~ $\overline{X}_1(i, j) = 0$,\\\vspace{0.1cm}
\hspace{2.72cm}$U_1(i, j) = +\infty$.}

\For{$t$ from $1$ to $T/d$}{\vspace{0.2cm}

    \tcp{Play the best block for $\tilde{r}$ based on the UCBs}\vspace{0.1cm}
    
    Play $\widehat{B}_t = [a_{t,1} \ldots a_{t, d}]$ that maximizes in $(a_s)_{s \le d}$\vspace{-0.3cm}
    \[
    \sum_{s=1}^d U_t\big(a_s, \tau_{a_s}(s)\big)\mathbb{I}\{\exists s_0 < s\colon a_{s_0} = a_s \}
    \]
    \vspace{-0.4cm}
    
    Get rewards $X_t\big(a_{t,1}, \tau_{a_{t,1}}(1)\big) \ldots X_t\big(a_{t,d}, \tau_{a_{t,d}}(d)\big)$
    
    \vspace{0.2cm}
    \tcp{Update the statistics}\vspace{0.1cm}
    \For{$i \le K$ and $j \le d$}{\vspace{0.2cm}
    
    \If{arm $i$ is played with delay $j$ in block $\widehat{B}_t$ (counting the multiplicities)}{
    \vspace{0.2cm}
    
    $T_{t+1}(i, j) = T_t(i, j) + 1$,\\
    
    $\displaystyle\overline{X}_{t+1}(i, j) = \frac{T_t(i, j)\overline{X}_t(i, j) + X_t(i, j)}{T_{t+1}(i, j)}$
    }
    \vspace{0.2cm}
    
    \Else{
    \vspace{0.15cm}
    
    $T_{t+1}(i, j) = T_t(i, j), \quad \overline{X}_{t+1}(i, j) = \overline{X}_t(i, j)$
    }
    \vspace{0.3cm}
    $\displaystyle U_{t+1}(i, j) = \overline{X}_{T_{t+1}(i, j)}(i, j) + \sqrt{\frac{\alpha \log(t+1)}{T_{t+1}(i, j)}}$
    }
    }
\label{alg:isi-combucb}
\end{algorithm}

\begin{figure}[!t]
    \centering
    \includegraphics[width=0.83\columnwidth]{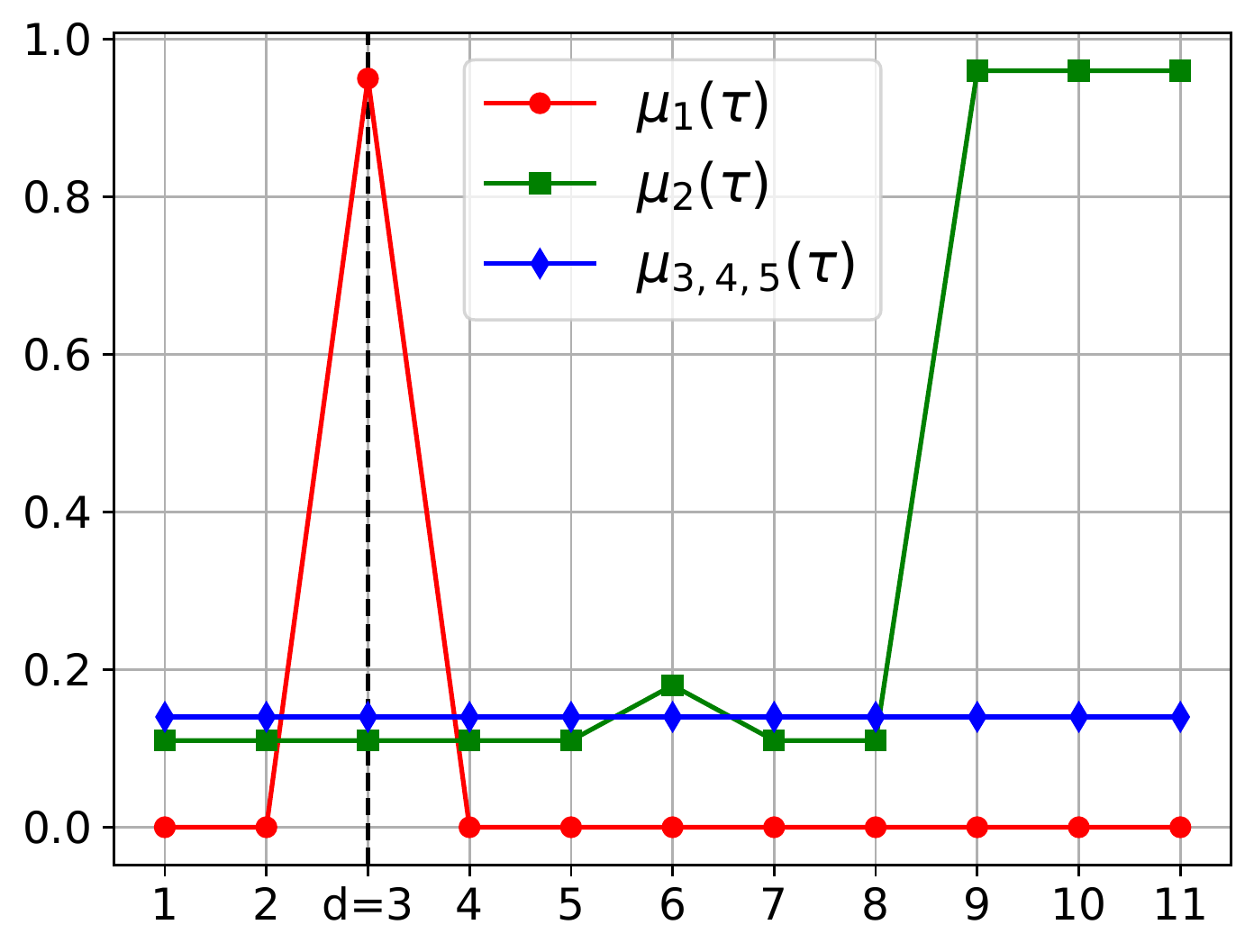}\\[0.3cm]
    ~~\includegraphics[width=0.79\columnwidth]{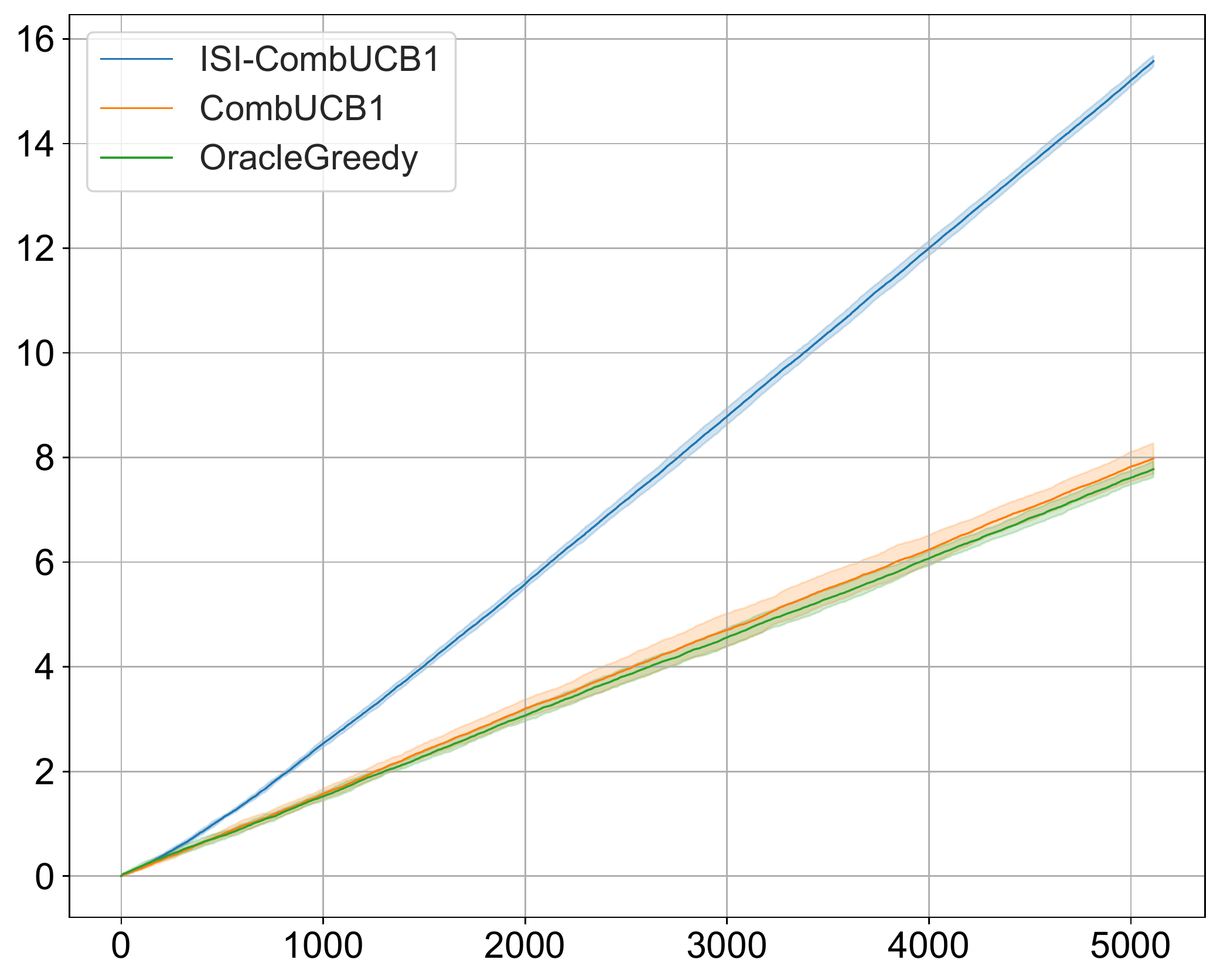}
    \caption{Reward functions with respect to $\tau$ (top) and cumulative rewards in thousands (bottom).}
    \label{fig:exp_figures}
\end{figure}

We consider the 5-armed LSD bandit with Bernoulli rewards and\vspace{-0.25cm}
\[
\mu_1(\tau) = 0.95\,\mathbb{I}\{\tau = 3\}, \quad \mu_2(\tau) = \begin{cases}0.16 &\text{if } \tau = 6\\0.96 &\text{if } \tau \ge 9\\0.14 &\text{otherwise}\end{cases}\,,
\]
and $\mu_{3,4,5}(\tau) = 0.15$ for all $\tau$, see \Cref{fig:exp_figures} (top).
The empirical results for $d=3$ are reported in \Cref{fig:exp_figures} (bottom).
\texttt{ISI-CombUCB1} converges towards the block {$[a^{(1)}, a^{(i)}, a^{(i)}, a^{(1)}]$}, where $i$ is any arm in $\{3, 4, 5\}$.
Here, the first pull of arm $a^{(1)}$ is needed to calibrate the state of this action, and to get an expected reward of $\mu_1(3)$ in the last pull of the block.
On the first 8 time steps, \texttt{OracleGreedy} plays any constant arm, except at $t=3$ and $t=6$, where it plays $a^{(1)}$.
At $t=9$, it faces a choice between $\mu_1(3)$ and $\mu_2(9)$ and thus goes for the latter.
Then, it will never be able to get $\mu_1(3)$ again, and will only get $\mu_2(9)$ every nine steps with any combination of constant actions in between.
As for \texttt{CombUCB1}, it is unable to see the spike at $\mu_1(3)$.
Precisely, whenever the algorithm plays arm $a^{(1)}$ with state $\tau > 3$, the expected reward is small, but still contributes to the UCB of $\mu_1(3)$.
This decreases the UCB for the arm, and the algorithm pulls it less and less frequently.
More experiments testing the calibration sequence approach are provided in the Appendices.

\par{\bf Acknowledgments.}{
This work has been partially supported by the academic grant \textit{Innovative Technologies, Interventions and Novel Approaches in the Field of Neuro-Wellness} from \href{https://www.joyventures.com}{Joy Ventures} and by the EU Horizon 2020 ICT-48 research and innovation action under grant agreement 951847, project ELISE (European Learning and Intelligent Systems Excellence).}

\bibliography{ref}

\clearpage
\appendix
\onecolumn
\section{TECHNICAL PROOFS}
\label{apx:proofs}

In this section, we gather all the technical proofs omitted in the main body of the paper.


\subsection{Proof of Proposition \ref{prop:hardness}}
\label{apx:hardness}

First note that focusing on the case where the $\mu_i$ are constant on $\mathbb{Z}^-$ is enough to prove NP-hardness.
Within this setting, we actually prove a slightly more general result than \Cref{prop:hardness}, namely that for any $\btauinit \in \mathbb{N}^K$, computing
\begin{equation}\label{eq:hard_pbm}
B^* = \argmax_{|B| = d} ~ r(B \mid \btauinit)
\end{equation}
is NP-hard as soon as $d$ is greater than a certain value $d_0$ precised later on.
In particular, when $\btauinit = \bm{1}$, and $d = T$, solving \eqref{eq:hard_pbm} amounts to find the best policy.
For some $T$ large enough, we get $d \ge d_0$, and we obtain that computing the optimal policy is NP-hard (\Cref{prop:hardness}).
But more interestingly, this result also highlights that the inner optimization problem of a \texttt{CombUCB1} approach is itself NP-hard.
Note that the same result holds for the inner optimization problem of \texttt{ISI-CombUCB1}, as the hardness can be similarly proved with $\tilde{r}$ instead of $r$ in \eqref{eq:hard_pbm}.

Our reduction is largely adapted from the one in \citep{basu2019blocking}.
Indeed, our framework encapsulates the setting described by Equations \eqref{eq:setting_pw}, which is the one used by the authors to show the hardness of their problem.
The main differences are: 1) we need to take into account the initial state $\btauinit$, which can be handled by considering a larger block size $d$, and 2) in \citep{basu2019blocking}, any empty slot in the scheduling is automatically filled with a pull of the $0$ arm (it is the only arm which can be played), while we have to invoke a finer argument, namely that at any empty slot, playing the $0$ arm is the best possible action, since all actions yield a null expected reward, but playing the $0$ arm allows every other arms to recharge.

As in \citep{basu2019blocking}, we thus consider the Pinwheel Scheduling Problem (PSP), see \citep{holte1989pinwheel}, which can be stated as follows.
Given a set of actions $\{1, \dots, K\}$, associated to delays $(d_i)_{i=1}^K \in \mathbb{N}^K$, the PSP consists in deciding whether it is possible or not to design an action scheduling, i.e., a mapping from $\mathbb{N}$ to $\{1, \ldots, K\}$, such that each arm $i$ is played at least once in any sequence of $d_i$ consecutive pulls.
A PSP instance with such a scheduling is called a YES instance.
Otherwise, it is referred to as a NO instance.
Furthermore, a PSP instance is said to be dense if $\sum_{i=1}^K 1/d_i = 1$.
In particular, this implies for YES instances that each arm $i$ is played exactly every $d_i$ pulls.
It has been shown in \citep{bar2002minimizing} that PSP on dense instances is NP-complete.
In the next paragraph, we show that PSP on dense instances reduces to particular instances of our problem, described by Equations \eqref{eq:setting_pw}, proving the hardness of our problem.

Given a dense PSP instance, we construct an instance of our problem as follows.
For every arm $i \leq K$, we set:
\begin{equation}\label{eq:setting_pw}
\mu_i(\tau) = \mathbb{I}\{ \tau \geq d_i \}. 
\end{equation}
We set an additional arm $K+1$, such that $\mu_{K+1}(\tau) = 0$ for all $\tau$.
Given an initial state $\btauinit$, we want to find the best block of actions $B^* = \argmax_{\lvert B \rvert = d} ~ r(B \mid \btauinit)$.
Then, we have two possibilities:
\begin{itemize}
\item if the PSP is a YES instance, the PSP schedule gives a $1$ reward at each time step.
Taking into account the fact that the initial state might not exactly suit the schedule, we obtain that $r(B^* \mid \btauinit, \text{YES}) \ge d - K$.
\item if the PSP is a NO instance, we can use the same argument as in \citep{basu2019blocking} and prove that $r(B^* \mid \btauinit, \text{NO}) \le d - \left\lfloor \frac{d}{\prod_{i=1}^K d_i} \right\rfloor$.
\end{itemize}

Hence, for $d \ge d_0 \coloneqq (K+1) \prod_{i=1}^K d_i$, if we were able to compute $B^*$, and therefore $r(B^* \mid \btauinit)$, we could discriminate YES from NO instances, depending on whether $r(B^* \mid \btauinit) \ge d - K$ or not.
Solving PSP on dense instances thus reduces to solving \eqref{eq:hard_pbm} for the particular choice of reward functions explicited in \eqref{eq:setting_pw}, therefore proving the hardness of the latter problem.\qed


\subsection{Proof of Lemma \ref{lem:cyclic}}
\label{apx:cyclic}

Assume without loss of generality that $a^{(1)}$ is the first action played.
If there is no switch of actions, then $a^{(1)}$ is played indefinitely, which means that the sequence is cyclic.
If there is only one (respectively two) switch(es) of actions, then $a^{(2)}$ (respectively $a^{(1)}$) is played indefinitely after some time step, from which the sequence is cyclic.
If there are three (or more) switches of actions, then the sequence witnesses twice the switch of actions $[a^{(1)}, a^{(2)}]$, and thus visits twice the state $(\tau_1, \tau_2) = (1, -1)$.
Since the policy is deterministic, the same cycle will be repeated.\qed

Note however that this property does not resist to the number of arms, as we can exhibit optimal policies with $3$ arms that never visit the same state twice.
Let $\A = \{a^{(1)}, a^{(2)}, a^{(3)}\}$, and consider the sequence of actions built as follows.

For $m = 1, 2, \ldots$ play\vspace{-0.3cm}
\begin{itemize}
    \item $a^{(1)}$ $m$ times consecutively,
    \item $a^{(2)}$ $m$ times consecutively,
    \item $a^{(3)}$ $m$ times consecutively.
\end{itemize}

Assuming for simplicity that the states $\tau_i$ are initialized to $0$ and not to $1$ (this only impacts the states in the first block), it can be checked that, in block $m$, the state $(\tau_1, \tau_2, \tau_3)$ is given by
\[
\begin{cases}
(-t,~m-1+t,~t) & \text{after the $t^\text{th}$ pull of } a^{(1)}\\
(t,~-t,~m+t) & \text{after the $t^\text{th}$ pull of } a^{(2)}\\
(m+t,~t,~-t) & \text{after the $t^\text{th}$ pull of } a^{(3)}
\end{cases}
\]
From the above formula, it is immediate to check that no state is visited twice by the sequence.
Now it is easy to choose $\mu_1,\mu_2,\mu_3$ such that this policy is optimal.
For example, $\mu_i \equiv 1$ for $i=1,2,3$.


\subsection{Proof of Proposition \ref{prop:approx_cyclic}}
\label{apx:approx_cyclic}

Recall that \Cref{prop:approx_cyclic} is proven in the main body of the paper for the special case where the $\mu_i$ are constant on $\mathbb{Z}^-$.
The proof in the general case is very similar, but uses the second claim of \Cref{lem:key}, and more specifically the way $B'$ is constructed from $B$.

Similarly to the constant case, we know that there exists a block $B^*$ of length $d$ with average reward (at least) greater than the average reward of the complete optimal sequence.
Recalling the notation from the proof in the constant case, we have $B^* = [a^*_{t_0} \ldots a^*_{t_0+d-1}]$ and
\[
\frac{r(B^* \mid \btau^*(t_0))}{d} \ge \frac{1}{T} \sum_{t=1}^T r^*_t\,.
\]
Previously, we could simply repeat $B^*$, which is not possible anymore due to the possible effects showed in \eqref{eq:bad}.
Let $B'$, derived from $B$ as in the proof of \Cref{lem:key}.
Namely, if $\aone$ and $\atwo$ are the first two different actions played in $B$, we have $B' = [a^*_{t_0}\, \atwo\, a^*_{t_0 + 1} \ldots a^*_{t_0 + d - 1}]$.
Let $\btau_{B'}$ be the state reached after a play of $B'$ from initial state $\bm{1}$.
Using \Cref{lem:key}, the expected rewards $r_\pi(t)$ obtained by policy $\pi$ which plays cyclically $B'$ satisfy
\begin{align*}
\frac{1}{T}\sum_{t=1}^T r_\pi(t) &= \frac{r(B' \mid \bm{1})}{T} + \frac{T-d}{T}~\frac{r(B' \mid \btau_{B'})}{d}\\
&= \frac{d}{T}~\frac{r(B' \mid \bm{1})}{d} + \left(1 - \frac{d}{T}\right)\frac{r(B' \mid \btau_{B'})}{d}\\
&\ge \frac{d}{T}~\frac{r(B^* \mid \btau^*(t_0)) - (K + 2)}{d} + \left(1 - \frac{d}{T}\right)\frac{r(B^* \mid \btau^*(t_0)) - (K+2)}{d}\\
&= \frac{r(B^* \mid \btau^*(t_0)) - (K + 2)}{d}\\
&\ge \frac{1}{T} \sum_{t=1}^T r^*_t - \frac{K+2}{d}\,.
\end{align*}
\qed


\subsection{Proof of Proposition \ref{prop:hardness_cyclic}}
\label{apx:hardness_cyclic}

Similarly to \Cref{prop:hardness}, we only need to focus on the case where the reward functions are constant on $\mathbb{Z}^-$, and actually prove here a stronger result than \Cref{prop:hardness_cyclic}.
Namely, we show that for some $d$, computing
\begin{equation}\label{eq:hard_pbm_cyclic}
B^* = \argmax_{|B| = d} ~ r(B \mid \btau_B)
\end{equation}
where $\btau_B$ is the state reached by the system after a play of block $B$, is NP-hard, even when the reward functions can be totally ordered.
The optimal cyclic policy (with cycle length $d$) is obtained by repeating indefinitely the solution to \eqref{eq:hard_pbm_cyclic}, and the NP-hardness of the latter problem then yields \Cref{prop:hardness_cyclic}.

This proof is also based on a reduction of the Pinwheel Scheduling Problem (PSP), see \Cref{apx:hardness}.
Given a PSP dense instance $(d_i)_{i=1}^K$, we construct an instance of our problem as follows.
For every arm $i=1,\ldots,K$, we set:
\begin{equation}\label{eq:setting_pw_2}
\mu_i(\tau) = \sqrt{\tau/d_i}\,.
\end{equation}
Note that setting \eqref{eq:setting_pw_2} is fundamentally different from setting \eqref{eq:setting_pw} as we have a total ordering of the arms, i.e., for all $\tau$ we have $\mu_1(\tau) \ge \mu_2(\tau) \ge \ldots \ge \mu_K(\tau)$, with the convention $d_1 \le d_2 \le \ldots \le d_K$.
We also highlight that we do not introduce an additional null arm here, as opposed to \eqref{eq:setting_pw_2}.
Finally, although the $\mu_i$ are unbounded for the moment, thus breaking \Cref{def:LSD}, we see at the end of the proof how to consider bounded $\mu_i$ without altering the subsequent analysis.

We now show that solving \eqref{eq:hard_pbm_cyclic} with reward functions \eqref{eq:setting_pw_2} allows us to determine if the PSP instance is a YES or a NO instance.
Let $n_i$ be the number of times action $i=1,\ldots,K$ is played in the block, and $\tau_{ij}$, for $j = 1,\ldots,n_i$ be the different states in which arm $i$ is pulled.
Problem \eqref{eq:hard_pbm_cyclic} is equivalent to: 
\begin{equation}\label{eq:psp_problem}
\max_{(n_i)_{i=1}^K} \max_{(\tau_{ij})_{i=1, j=1}^{K, n_i}} \quad \sum_{i=1}^K \sum_{j=1}^{n_i} \sqrt{\tau_{ij} / d_i} \text{\qquad subject to \qquad} \begin{cases}\sum_{i=1}^K n_i = d\\[0.2cm]\sum_{j=1}^{n_i} \tau_{ij} = d \quad i = 1,\ldots,K
\end{cases}
\end{equation}
We start by maximizing with respect to the $\bm{\tau} = (\tau_{ij})_{i=1, j=1}^{K,n_i}$.
The Lagrangian writes
\[
\mathcal{L}(\bm{\tau}, \bm{\lambda}) = \sum_{i=1}^K \sum_{j=1}^{n_i} \sqrt{\tau_{ij} / d_i} + \sum_{i=1}^{K} \lambda_i \left(\sum_{j=1}^{n_i}\tau_{ij} - d\right)\,.
\]
The KKT conditions (gradient of the Lagrangian and primal feasibility) write
\begin{align}
\label{eq:kkt1}
\frac{\partial \mathcal{L}(\bm{\tau}, \lambda)}{\partial \tau_{ij}} &= \frac{1}{2 \sqrt{d_i \tau_{ij}}} + \lambda_i = 0 \hspace{-2.5cm}&&i=1,\ldots,K,\, j=1,\ldots,d\\
\label{eq:kkt2}
\sum_{j=1}^{n_i} \tau_{ij} &= d &&i=1,\ldots,K.
\end{align}
Solving \eqref{eq:kkt1} for $\tau_{ij}$ we obtain a quantity independent of $j$.
Then \eqref{eq:kkt2} implies that $\tau_{ij} = d/n_i$ for each $i=1,\ldots,K$.
Replacing $\tau_{ij} = d/n_i$ into \eqref{eq:psp_problem}, we can now maximize with respect to the $n_i$.
The Lagrangian writes
\[
\mathcal{L}(\bm{n}, \lambda) = \sum_{i=1}^K \sqrt{dn_i / d_i} + \lambda\left(\sum_{i=1}^K n_i - d\right)
\]
and the KKT conditions (gradient of the Lagrangian and primal feasibility) are
\begin{align}
\label{eq:kkt3}
\frac{\partial \mathcal{L}(\bm{n}, \lambda)}{\partial n_i} &= \sqrt{\frac{d}{4 n_i d_i}} + \lambda = 0 \qquad i = 1,\ldots,K \\
\label{eq:kkt4}
\sum_{i=1}^K n_i &= d
\end{align}
such that replacing \eqref{eq:kkt3} into \eqref{eq:kkt4}, we obtain $n_i = d/d_i$, which implies $\tau_{ij} = d_i$.

Assume now that $d$ can be divided by all the $d_i$, such that $n_i = d/d_i \in \mathbb{N}$.
From the values of $n_i$ and $\tau_{ij}$ obtained, we can see that the optimal block for \eqref{eq:hard_pbm_cyclic} corresponds to a Pinwheel schedule.
It yields an average reward equal of $1$, and is achievable if and only if the Pinwheel instance is a YES instance.
Therefore, if we can solve \eqref{eq:hard_pbm_cyclic}, we can tell if the average reward is equal to $1$ or smaller, and thus decide whether the instance is a YES or NO instance.
We have reduced PSP on dense instances to \eqref{eq:hard_pbm_cyclic}, which is therefore shown to be NP-hard.
To our knowledge, this is the first hardness result for decomposable reward functions of the form $\mu_i(\tau) = \mu_i^0 \cdot \gamma(\tau)$.

Now, let $d_\text{max} = \max_{i=1,\ldots,K} d_i$.
Note that replacing \eqref{eq:setting_pw_2} with the bounded functions
\[
\mu_i(\tau) = \begin{cases}
\sqrt{\tau/d_i}         &\text{if } \tau \le d_\text{max}\\
\sqrt{d_\text{max}/d_i} &\text{otherwise}
\end{cases}
\]
does not change change the optimal schedule (arm $i$ is played every $d_i \le d_\text{max}$ time steps) and the analysis is unchanged.\qed


\subsection{Proof of Lemma \ref{lem:key}}
\label{apx:key}

Recall that \Cref{lem:key} is proven in the main body of the paper for the special case where the $\mu_i$ are constant on $\mathbb{Z}^-$.
We focus here on the general version only.
For any block $B$ of size $d$, we want to find $B'$ of length $d$ such that for any initial states $\btauinit, \btauinit' \in \mathbb{Z}^K$, we have
\begin{equation}\label{eq:recall}
r(B' \mid \btauinit') \ge r(B \mid \btauinit) - (K + 2)\,.
\end{equation}

First, we analyze what happens if we keep the same $B$, instead of $B'$, in the left-hand side of \eqref{eq:recall}.
Similarly to the constant case, it is important to note that an action is impacted by a change of initial state only the first time it is played in the block (possibly with several consecutive pulls).
Indeed, when this sequence (which might be of length $1$ if a switch follows the first pull) is interrupted, the arm goes to state $1$, no matter the state it was before, and similar subsequent actions then yield similar states/rewards.
A second point to notice is that only the first action $a_1$ in the block can be pulled with a negative state at the beginning of its sequence of pulls.
Indeed, let $\btau_\text{new} = \bm{\delta}(\btauinit, a_1)$, where $\bm{\delta}$ is a componentwise version of $\delta_a$, and $a_1$ is the first action played in $B$.
It is easy to check that for all $a \ne a_1$, we have $\tau_{\text{new}, a} \ge 1$ (either the action was in a negative state and was set to 1 as $a_1 \ne a$ has been played, or it was in a positive state and incremented by 1).
Combining these two remarks, we know that for any action $a$ different from $\aone$ (we can assume that $a_1 = \aone$ without loss of generality), the loss incurred by a change of initial state due to the pulls of $a$ is bounded by $1$.
Indeed, the expected rewards obtained during the first play (possibly with consecutive pulls) with initial states $\btauinit$ and $\btauinit'$ are respectively
\begin{equation}\label{eq:rewards_collected}
\mu_a(\tau_a),\, \mu_a(-1),\, \mu_a(-2), \ldots \text{\qquad and \qquad} \mu_a(\tau'_a),\, \mu_a(-1),\, \mu_a(-2), \ldots
\end{equation}
where $\tau_a$ and $\tau'_a$ are two generic \textbf{positive} values (thanks to the remark we made earlier) that depend on $\tau_{\text{init}, a}$, $\tau'_{\text{init}, a}$, and the place of the first pull of $a$ in the block.
Making these values explicit is not important here, as the boundedness of $\mu_a$ ensures anyway that the difference of expected rewards obtained is smaller than $1$.

Now, what happens for the first pulls of $a^{(1)}$?
We assume that it is played $n_1$ times consecutively at the beginning of $B$ (again, we might have $n_1=1$).
Assuming that $\tau_{\mathrm{init}, 1}$ is positive and $\tau_{\mathrm{init}, 1}'$ is negative, the expected rewards collected are respectively
\begin{equation}\label{eq:bad}
\mu_1(\tau_{\mathrm{init}, 1}),\, \mu_1(-1)\, \ldots\, \mu_1(-n_1+1) \text{\qquad and \qquad} \mu_1(\tau_{\mathrm{init}, 1}'),\, \mu_1(\tau_{\mathrm{init}, 1}'-1)\, \ldots\, \mu_1(\tau_{\mathrm{init}, 1}'-n_1+1)\,.
\end{equation}
Unlike in the previous case, the difference between these two sequences cannot be bounded by $1$ (it might even be equal to $n_1$ if $\tau_{\mathrm{init}, 1}' \le -n_1$).
This is why the same $B$ cannot be used on both sides of \eqref{eq:recall}.
To break this sequence of pulls, we define $B'$ as follows.
Without loss of generality, let $a^{(2)}$ be the second different action played in $B$ (if $B$ is only composed of pulls of $a^{(1)}$, we can set $a^{(2)}$ to be any action of $\mathcal{A}$ different from $a^{(1)}$).
The block $B'$ is equal to $B$, except that the second pull of $B'$ is necessarily $a^{(2)}$.
We may now face $3$ different cases.
\begin{itemize}
\item $n_1=1$. Note that here $B' = B$. Since $n=1$, the difference between the two sequences of rewards due to the pulls of $a^{(1)}$ is at most $1$. For all other actions, we can use the analysis we developed at the beginning, and the total difference is at most $K$.
\item $n_1=2$. Then, denoting by $n_2$ the number of times $\atwo$ is played consecutively after $\aone$, the expected rewards obtained by $B$ with initial state $\btauinit$ and by $B'$ with initial state $\btauinit'$ are respectively
\[\begin{matrix}
& \red{\mu_1(\tau_{\text{init}, 1})}, & \red{\mu_1(-1)} \text{ or } \red{\mu_1(\tau_{\text{init}, 1} - 1)}, & \red{\mu_2(\tau_2)}, & \mu_2(-1) & \ldots & \mu_2(-n_2 + 1)\\[0.2cm]
\text{and} & \red{\mu_1(\tau'_{\text{init}, 1})}, & \red{\mu_2(\tau'_2)}, & \mu_2(-1), & \mu_2(-2) & \ldots & \red{\mu_2(-n_2)}
\end{matrix}\]
where the \emph{or} comes from the fact that we don't know if $\tau_{\mathrm{init}, 1}$ is positive (then the next reward is $\mu_1(-1)$) or negative (then next reward is $\mu_1(\tau_{\text{init}, 1} - 1)$). Here, $\tau_2$ and $\tau'_2$ are two generic positive numbers, whose values are not important as the difference between the two sequences is contained in the red rewards, and thus bounded by $3$ anyway. For all other $K-2$ actions, we can apply the standard analysis, such that in total the difference cannot exceed $K+1$.
\item $n_1 \ge 3$. Using the same notation as above, we have now respectively for the first $n_1$ pulls
\[\hspace{-0.7cm}\begin{matrix}
& \red{\mu_1(\tau_{\text{init}, 1})}, & \red{\mu_1(-1)} \text{ or } \red{\mu_1(\tau_{\text{init}, 1} - 1)}, & \red{\mu_1(-2)} \text{ or } \red{\mu_1(\tau_{\text{init}, 1} - 2)}, & \ldots & \mu_1(-n_1 + 1) \text{ or } \mu_1(\tau_{\text{init}, 1} - n_1 + 1)\\[0.2cm]
\text{and} & \red{\mu_1(\tau'_{\text{init}, 1})}, & \red{\mu_2(\tau'_2)}, & \red{\mu_1(1)}, & \ldots & \mu_1(-n_1 + 3)
\end{matrix}\]
The red terms incur a loss of at most $3$.
Then, if $\tau_{\text{init}, 1} \ge 1$, the remaining rewards (non red) in the play of $B$ are  $\sum_{j=3}^{n_1-1} \mu_1(-j) \le \sum_{j=1}^{n_1-3} \mu_1(-j)$ as $\mu_1$ is nondecreasing on $\mathbb{Z}^-$, so the rewards obtained by $B'$ are greater.
If $\tau_{\text{init}, 1} < 0$, we have $\sum_{j=3}^{n_1-1} \mu_1(\tau_{\text{init}, 1} - j) \le \sum_{j=1}^{n_1-3} \mu_1(-j)$ for the same reasons.
So overall, the difference is bounded by $3$.

As for the $n_2$ following pulls, recalling that $\tau_2$ and $\tau'_2$ are positive since the preceding pull is $a^{(1)}$, we have
\[\begin{matrix}
&\red{\mu_2(\tau_2)}, & \mu_2(-1), & \ldots & \mu_2(-n_2 + 1)\\[0.2cm]
\text{and} & \red{\mu_2(\tau'_2)}, & \mu_2(-1), & \ldots & \mu_2(-n_2 + 1)
\end{matrix}\]
with a difference bounded by $1$.
For the $K-2$ other actions, we still have the bound of $K-2$, so that in total the difference does not exceed $K+2$.
\end{itemize}
\vspace{-0.4cm}
\qed


\subsection{Proof of Proposition \ref{prop:approx_all}}
\label{apx:approx_all}

Recall the notation $B^*_\text{double}$, and the additional notation $B^*$ and $\btau^*(t_0)$ introduced in the proof of \Cref{prop:approx_cyclic} in the main body of the paper.
We have
\begin{align}
\nonumber
\frac{1}{T}\sum_{t=1}^T \rdouble(t) &= \frac{r(\Bdouble \mid \bm{1})}{T} + \frac{T-d}{T}~\frac{r(\Bdouble \mid \btau_{\Bdouble})}{d}\\
\nonumber
&\ge \left(1 - \frac{d}{T}\right)\frac{r(\Bdouble \mid \btau_{\Bdouble})}{d}\\
\label{eq:A1}
&\ge \left(1 - \frac{d}{T}\right)\frac{r(B^* \mid \btau_{B^*})}{d}\\
\label{eq:A2}
&\ge \left(1 - \frac{d}{T}\right)\frac{r\big(B^* \mid \btau^*(t_0)\big) - K}{d}\\
\label{eq:A3}
&\ge \left(1 - \frac{d}{T}\right)\left(\frac{1}{T}\sum_{t=1}^T r^*(t) - \frac{K}{d}\right)\,,
\end{align}
where \eqref{eq:A1} holds because $\Bdouble$ is a maximizer of $r(B \mid \btau_B)$, \eqref{eq:A2} holds due to \Cref{lem:key}, and \eqref{eq:A3} is a direct consequence of the definition of $B^*$.
Using similar arguments, we also have
\[
\frac{1}{T}\sum_{t=1}^T r_\sigma(t) \ge \frac{r(B^*_\sigma \mid \btau_{B_\sigma})}{d + K} \ge \frac{d}{d+K} ~ \frac{r(B^* \mid \btau_{B_\sigma})}{d} \ge \frac{d}{d+K} ~ \frac{r\big(B^* \mid \btau^*(t_0)\big) - K}{d} \ge \frac{d}{d+K}\left(\frac{1}{T} \sum_{t=1}^T r^*(t)\right) - \frac{K}{d + K}\,,
\]

and
\[
\frac{1}{T}\sum_{t=1}^T \rew^*(t) \ge \frac{\rew(\tB)}{d} \ge \frac{\rew(B^*)}{d} \ge \frac{r\big(B^* \mid \btau^*(t_0)\big) - K}{d} \ge \frac{1}{T} \sum_{t=1}^T r^*(t) - \frac{K}{d}\,.
\]
\qed

\section{GENERAL STUDY WHEN \texorpdfstring{$\mu_a$}{mua} IS NOT CONSTANT ON \texorpdfstring{$\mathbb{Z}^-$}{Z}}

In this section, we detail the results in the general case where the $\mu_i$ are not constant on $\mathbb{Z}^-$, that were omitted in \Cref{sec:method} for simplicity.
If the definitions of $\Bdouble$ and $B^*_\sigma$ remain unchanged, we need to adapt the definition of $\tB$.
Indeed, the goal of $\rew$ is to use first pulls as a calibration step, such that subsequent pulls are in a controlled state.
However, assume that the system is in a state with unknown $\tau_a \le 0$, and that \texttt{ISI-CombUCB1} returns a block starting with several pulls of $a$.
None of the rewards obtained by the sequence can be used to update our estimates, as they are obtained in states $\tau_a, \tau_a -1, \tau_a -2, \ldots$ which are all unknown.
This problem is avoided with constant $\mu_a$ on $\mathbb{Z}^-$, as the rewards would be obtained in states $\tau_a, -1, -1, \ldots$ such that the first pull indeed plays its calibration role.
Therefore, we now define $\tB$ as follows
\begin{equation}\label{eq:new_B}
\tB = \argmax_{|B| = d,~a_1 \ne a_2} ~ \rew(B)\,.
\end{equation}
The fact that the first two actions in $\tB$ are now different prevents the issues described above.
On the other hand, note that $B'$ in the second claim of \Cref{lem:key} also possesses two different first actions, such that the extra constraint in \eqref{eq:new_B} is not harmful in terms of approximation.


\subsection{Equivalent of Proposition \ref{prop:approx_all} and Theorem \ref{thm:regret}}

\Cref{prop:new} and \Cref{thm:new} are the analog of \Cref{prop:approx_all} and \Cref{thm:regret} respectively.

\begin{proposition}\label{prop:new}
Let $(\mu_i)_{i=1}^K$ be a LSD bandit with $K$ arms.
Let $T \ge 0$ be the horizon, and $d \ge 0$ that divides $T$.
Let $\rdouble(t)$ be the expected rewards obtained at time step $t$ by the policy playing cyclically $\Bdouble$.
We have
\[
\frac{1}{T} \sum_{t=1}^T \rdouble(t) \ge \left(1 - \frac{d}{T}\right)\left(\frac{1}{T} \sum_{t=1}^T r^*(t) - \frac{K + 2}{d}\right)\,.
\]
Let $\sigma \in \mathfrak{S}_K$, and assume that $d+K$ divides $T$.
Let $r_\sigma(t)$ be the expected rewards obtained at time step $t$ by the policy playing cyclically $[B_\sigma, B^*_\sigma]$.
We have
\[
\frac{1}{T} \sum_{t=1}^T r_\sigma(t) \ge \frac{d}{d+K} \left(\frac{1}{T} \sum_{t=1}^T r^*(t)\right) - \frac{K + 2}{d+K}\,.
\]
Let $\tilde{r}^*(t)$ be the expected rewards obtained at time step $t$ by the policy playing cyclically $\tB$.
We have
\[
\frac{1}{T} \sum_{t=1}^T \tilde{r}^*(t) \ge \frac{1}{T} \sum_{t=1}^T r^*(t) - \frac{K + 2}{d}\,.
\]

\end{proposition}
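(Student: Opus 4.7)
The plan is to follow the proof of Proposition \ref{prop:approx_all} from Appendix \ref{apx:approx_all} essentially verbatim, replacing every invocation of the constant-$\mu_i$ version \eqref{eq:key_constant} of Lemma \ref{lem:key} by its general counterpart \eqref{eq:key_general}. Two structural changes result: the additive loss becomes $K+2$ instead of $K$, and the competitor block used to lower-bound the relevant maximizers is no longer $B^*$ itself but the modified block $B'$ constructed in the proof of \eqref{eq:key_general} (namely, $B^*$ with its second action replaced by the second distinct action of $B^*$). Crucially, this $B'$ always satisfies $a_1 \ne a_2$, which is exactly the admissibility constraint built into the new definition \eqref{eq:new_B} of $\tB$.

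For the first two bounds the chain mirrors the constant case. Since $\Bdouble$ maximizes $r(B \mid \btau_B)$ over length-$d$ blocks, $r(\Bdouble \mid \btau_{\Bdouble}) \ge r(B' \mid \btau_{B'}) \ge r(B^* \mid \btau^*(t_0)) - (K+2)$ by \eqref{eq:key_general}; including the non-negative first-block reward and using $r(B^* \mid \btau^*(t_0))/d \ge \frac{1}{T}\sum_t r^*(t)$ yields the stated bound on $\rdouble$. The $r_\sigma$ bound follows by the same chain applied to the maximizer $B^*_\sigma$ of $r(B \mid \btau_\sigma)$, lower-bounding each calibration sub-block $B_\sigma$ by $0$.

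For the third bound, one first reduces (exactly as in the constant case) to $\frac{1}{T}\sum_t \tilde{r}^*(t) \ge \tilde{r}(\tB)/d$ by using $r(B \mid \btau) \ge \tilde{r}(B)$ for every $B,\btau$. The constraint $a_1 \ne a_2$ in \eqref{eq:new_B} makes $\tilde{r}$ initial-state independent, and since the specific $B'$ above satisfies this constraint it is an admissible competitor, so $\tilde{r}(\tB) \ge \tilde{r}(B')$. What remains is to show $\tilde{r}(B') \ge r(B^* \mid \btau^*(t_0)) - (K+2)$, and this is the main obstacle: naively chaining $\tilde{r}(B') \ge r(B' \mid \btauinit') - K$ with \eqref{eq:key_general} loses a full $K$ and yields only $2K+2$.

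To recover the tight $K+2$ constant, the plan is to compare $r(B^* \mid \btau^*(t_0))$ and $\tilde{r}(B')$ \emph{directly}, position by position, mimicking the case analysis in the proof of Lemma \ref{lem:key} but now exploiting that $\tilde{r}(B')$ already discards the first-pull rewards of $B'$. Writing $n_1$ for the length of the initial run of $a^{(1)} = a^*_{t_0}$ in $B^*$ and $a^{(2)}$ for its second distinct action, in the generic case $n_1 \ge 3$: (i) positions $1$, $2$, and the first-appearance position of each of the remaining $K-2$ arms in $B^*$ contribute at most $1$ each (they are first pulls in $B^*$ dropped by $\tilde{r}(B')$), totalling $\le K$; (ii) position $n_1+1$ (first pull of $a^{(2)}$ in $B^*$, non-first pull in $B'$) and position $3$ each contribute at most $1$ by boundedness; (iii) positions $4,\ldots,n_1$ contribute non-positively because monotonicity of $\mu_1$ on $\mathbb{Z}^-$ favors the shallower state reached by $B'$; (iv) from position $n_1+2$ onwards the states of every arm re-synchronize between $B^*$ and $B'$ and the remaining actions coincide, so the contribution vanishes. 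Summing gives the desired $K+2$ loss. The border cases $n_1 = 1$ (where $B' = B^*$ already satisfies the constraint, so only (i) is needed) and $n_1 = 2$ are handled analogously, using a short telescoping argument on any residual run of consecutive $a^{(2)}$-pulls to absorb the mismatch.
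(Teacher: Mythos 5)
Your proposal is correct and follows essentially the same route as the paper: the same three chains of inequalities, with $(B^*)'$ (the block of \Cref{lem:key} with its second pull forced to the second distinct action) used as the admissible competitor for $\Bdouble$, $B^*_\sigma$, and $\tB$, each losing $K+2$ via the general version of \Cref{lem:key}. The only difference is that where the paper asserts $\rew\big((B^*)'\big) \ge r\big(B^* \mid \btau^*(t_0)\big) - (K+2)$ ``from similar arguments as those used to prove \Cref{lem:key}'', you actually carry out the position-by-position comparison (monotonicity on $\mathbb{Z}^-$ for the truncated initial run, re-synchronization after it, and the telescoping for the $n_1\le 2$ border cases), and your accounting indeed yields the claimed $K+2$.
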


\begin{proof}
The proof is similar to that of \Cref{prop:approx_all}, see \Cref{apx:approx_all}.
The only difference is that we cannot use $r(B^* \mid \btau_{B^*}) \ge r\big(B^* \mid \btau^*(t_0)\big)- K$, as we did during the proof of the first claim of \Cref{prop:approx_all}.
Instead, we have to involve $(B^*)'$, as defined in \Cref{lem:key}.
Then, we have
\begin{gather*}
r(\Bdouble \mid \btau_{\Bdouble}) \ge r\big((B^*)' \mid \btau_{(B^*)'}\big) \ge r\big(B^* \mid \btau^*(t_0)\big)- (K+2)\,,\\[0.2cm]
r(B^*_\sigma \mid \btau_{B_\sigma}) \ge r\big((B^*)' \mid \btau_{B_\sigma}\big) \ge r\big(B^* \mid \btau^*(t_0)\big) - (K + 2)\,,\\[0.2cm]
\rew\big(\tB\big) \ge \rew\big((B^*)'\big) \ge r\big(B^* \mid \btau^*(t_0)\big) - (K+2)\,,
\end{gather*}
which allow to complete the missing parts in the proofs.
Note that in the last equation, the first inequality holds true thanks to the fact that $(B^*)'$ has also two first actions that are different, while the second inequality can be recovered from similar arguments as those used to prove \Cref{lem:key}.
\end{proof}

Similarly to the constant case, we can use a \texttt{CombUCB1} approach to produce a sequence of blocks with small regret against $\tB$.
The only change is the size of the representation: it is now of dimension $2Kd^2$ since we have $2d$ possible states for the arms (ranging from $-d$ to $d$).
Combining Theorem 6 in \citet{kveton2015tight} with \Cref{prop:new}, we obtain \Cref{thm:new}.

\begin{theorem}\label{thm:new}
Let $(\mu_i)_{i=1}^K$ be an LSD bandit with $K$ arms.
Let $T \ge 0$ be the horizon, and choose $d \ge 0$ that divides $T$.
Then \texttt{ISI-CombUCB1}, run with block size $d$ and exploration parameter $\alpha=1.5$, has regret bounded by
\[
R_T \le \frac{(K+2)T}{d} + 47d\sqrt{2KT\log\frac{T}{d}} + \left(\frac{\pi^2}{3} + 1\right)2Kd^3\,.
\]
Choosing $d \propto T^{1/4}$, we obtain $R_T = \tilde{\mathcal{O}}(KT^{3/4})$, where $\tilde{\mathcal{O}}$ is neglecting logarithmic factors.
\end{theorem}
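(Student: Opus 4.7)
The plan is to decompose the pseudo-regret as
\[
R_T \;=\; \underbrace{\Big(\sum_{t=1}^T r^*(t) - \sum_{t=1}^T \tilde{r}^*(t)\Big)}_{\text{approximation}} \;+\; \underbrace{\Big(\sum_{t=1}^T \tilde{r}^*(t) - \mathbb{E}\Big[\sum_{t=1}^T r_{\text{alg}}(t)\Big]\Big)}_{\text{estimation}},
\]
where $\tilde{r}^*(t)$ is the per-round reward of the policy that plays $\tB$ cyclically and $r_{\text{alg}}(t)$ is the per-round reward of \texttt{ISI-CombUCB1}. The approximation term is bounded by $(K+2)T/d$ directly from the third inequality of \Cref{prop:new}, yielding the first term of the stated bound.

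For the estimation term, I would instantiate the CSB reduction described in \Cref{sec:method} with the universe extended to cover negative states. The base actions are triples $(i,\tau,t)$ with $i\in\{1,\ldots,K\}$, $\tau\in\{-d,\ldots,-1\}\cup\{1,\ldots,d\}$ and $t\in\{1,\ldots,d\}$, giving $L=2Kd^2$ base actions; each meta-round picks $N=d$ of them subject to the AC/UFP/FPF/TC constraints, and there are $n=T/d$ meta-rounds. The modified reward $\tilde{r}$ decomposes as a sum of base-action rewards $\mu_i(\tau)$ that are i.i.d.\ across meta-rounds conditional on the selected block, because the state coordinate absorbs all dependence on past blocks---which is precisely what the calibration-via-first-pulls mechanism ensures, together with the tweak at the start of this appendix that forces $\tB$ to begin with two distinct actions. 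Plugging $N$, $L$, $n$ and $\alpha=1.5$ into Theorem~6 of \citet{kveton2015tight} then gives the $47d\sqrt{2KT\log(T/d)} + (\pi^2/3+1)\cdot 2Kd^3$ estimation bound, where the cubic term bundles the CombUCB1 additive residual together with the $Kd$-round initialization phase of \Cref{alg:isi-combucb}. Summing the two pieces delivers the theorem, and balancing $KT/d$ against $d\sqrt{KT\log(T/d)}$ gives $d\propto T^{1/4}$, making all three terms $\tilde{\mathcal{O}}(KT^{3/4})$.

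The main obstacle is not the algebra but the CSB reduction. One has to verify that (i) the admissible subsets cut out by AC/UFP/FPF/TC coincide with the set of length-$d$ blocks whose first two actions differ, so that the CombUCB1 decoder returns a feasible and executable block; (ii) conditioned on the chosen block, the individual $\mu_i(\tau)$ rewards are genuinely i.i.d.\ across meta-rounds, which without the calibration mechanism and the two-distinct-first-actions constraint would fail due to the uncontrolled-state issue described at the start of this appendix; and (iii) the standard CombUCB1 regret analysis lifts unchanged to our $2Kd^2$-base-action instance once these two checks pass. None of these points is conceptually surprising, but they are where the new content of the general-case argument lies, and carefully spelling them out is what makes the CSB bound applicable as a black box.
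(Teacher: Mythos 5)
Your proof follows essentially the same route as the paper: the paper also obtains \Cref{thm:new} by combining the third claim of \Cref{prop:new} with Theorem~6 of \citet{kveton2015tight}, applied to the enlarged universe of $2Kd^2$ base actions (states ranging over $\{-d,\dots,-1\}\cup\{1,\dots,d\}$), exactly as you instantiate it; your checks (i)--(iii) correspond to the CA and TC$^{\pm}$ constraints and to the calibration discussion at the start of this appendix, and your parameter bookkeeping ($N=d$, $L=2Kd^2$, $n=T/d$) reproduces the stated $47d\sqrt{2KT\log(T/d)}$ and $(\pi^2/3+1)2Kd^3$ terms.

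One technical slip is worth fixing. Your decomposition pivots on $\sum_t \tilde r^*(t)$, the realized reward of the policy playing $\tB$ cyclically, but the CSB guarantee controls the $\rew$-reward of the played blocks against the benchmark $(T/d)\,\rew(\tB)$, not against that realized cyclic reward. Since $\rew$ discards first pulls, $\sum_t \tilde r^*(t)$ can exceed $(T/d)\,\rew(\tB)$ by up to $KT/d$, so your ``estimation'' term is not literally bounded by Kveton's theorem, and patching it naively would inflate the first term to $(2K+2)T/d$ rather than the stated $(K+2)T/d$. The clean chaining pivots on $(T/d)\,\rew(\tB)$: the inequality $\rew(\tB)/d \ge \frac{1}{T}\sum_t r^*(t) - (K+2)/d$ is exactly what the proof of \Cref{prop:new} establishes en route (via $\rew(\tB)\ge\rew\big((B^*)'\big)\ge r\big(B^*\mid\btau^*(t_0)\big)-(K+2)$, not the proposition's statement itself), and on the algorithm's side one uses that its realized reward is at least its $\rew$-reward because the ignored first-pull rewards are nonnegative. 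With that pivot your argument yields the theorem with the stated constants; the final $\tilde{\mathcal O}(KT^{3/4})$ rate under $d\propto T^{1/4}$ is unaffected either way.
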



\subsection{Integer Linear Program in the General Case}
\label{sec:negative}

The last step we need to adapt is the Integer Linear Program which constitutes the optimization problem of \texttt{ISI-CombUCB1}.
As explained at the beginning of the section, one difference is that we need to enforce the first two actions of the block to be different for calibration purposes.
The second important difference is the size of the representation: we introduce $Y^+$ and $Y^-$, both of size $Kd^2$, to encode pulls in positive and negative states respectively.
The problem can be described as follows.

Let $F \in \{0, 1\}^{K \times d}$, such that $F[i, t] = 1$ if and only if arm $i$ is played for the first time in the block at time step $t$.
Let $Y^{+} \in \{0, 1\}^{K \times d \times d}$, such that $Y^{+}[i, j, t] = 1$ if and only if arm $i$ is played at time step $t$ in state $j$.
Let $Y^{-} \in \{0, 1\}^{K \times d \times d}$,  such that $Y^{-}[i, j, t] = 1$ if and only if arm $i$ is played at time step $t$ in state $-j$.
In comparison to the previous Integer Linear Program, the objective function, i.e., the sum of the current UCBs for the second actions present in the block, and the conditions (AC), (UFP), and (FPF) remain unchanged.
Their formula are recalled for completeness.
As for the novelties, we introduce the Change Action (CA) constraint, i.e., the second pull in the block must differ from the first.
Time Consistency (TC) is now divided into TC for positive states ($\text{TC}^{+}$), i.e., an arm can be pulled in state $j \ge 1$ at time step $t$ only if it was pulled at time step $t-j$, and not pulled since, and TC for negative states, ($\text{TC}^{-}$), i.e., an arm can be pulled in state $-j \le - 1$ at time step $t$ only if it has been pulled consecutively for the last $j$ time steps.
Note that it is easier to express TC$^-$ using two conditions, depending on whether $-j = -1$ or $-j \le -2$.
Overall, we have (for index limits that make sense)
\begin{align}
&\sum_{i, j, s} \Big(Y^{+}[i, j, s] + Y^{-}[i, j, s]\Big)\,U_t(i, j) \tag{objective}\\[0.2cm]
\forall t \qquad &\sum_i F[i, t] + \sum_{i, j} Y^{+}[i, j, t] + \sum_{i, j} Y^{-}[i, j, t] = 1 \tag{AC}\\[0.1cm]
\forall i \qquad &\sum_t F[i, t] \le 1 \tag{UFP}\\[0.1cm]
\forall i, t \qquad &\sum_j Y^{+}[i, j, t] + \sum_j Y^{-}[i, j, t] \le \sum_{s=1}^{t-1} F[i, s] \tag{FPF}\\[0.1cm]
&\sum_i F[i, 2] = 1 \tag{CA}\\[0.1cm]
\forall i, j, t \qquad &Y^{+}[i, j, t] \le F[i, t-(j+1)] + \sum_l Y^{+}[i, l, t-(j+1)] - \sum_s Y^{+}[i, j-s, t-s]\nonumber\\
&\hspace{4.2cm}+ \sum_l Y^{-}[i, l, t-(j+1)] - \sum_l Y^{-}[i, l, t-j] \tag{$\text{TC}^{+}$}\\[0.1cm]
\forall i, t \qquad &Y^{-}[i, 1, t] \le \sum_l Y^{+}[i, l, t-1] + F[i, t - 1] \tag{$\text{TC}^{-}_1$}\\[0.1cm]
\forall i, t, j \ge 2 \qquad &Y^{-}[i, j, t] \le Y^{-}[i, j-1, t-1]\tag{$\text{TC}^{-}_2$}
\end{align}

Similarly to \Cref{sec:method}, we can approximately solve the above Integer Linear Program by a Branch-and-Bound-like approach, that we detail in the next section.


\subsection{Details about the Branch-and-Bound Heuristic}
\label{sec:heuristic}

In this section, we provide more details about the heuristic we use to approximately solve the Integer Linear Program (ILP).
It works as follows.
For every $i \le K$, we set the first action of the block (of total size $d$) to~$a^{(i)}$.
We then solve a relaxed version of the ILP, optimizing only for actions $a_2, \ldots a_d$ (recall that $a_1$ is fixed to $a^{(i)}$), and allowing for continuous values in $[0, 1]$, instead of $\{0, 1\}$.
This can be done efficiently as the relaxed problem is a standard Linear Program.
We finally set $a_1$ to the $a^{(i)}$ which has given the highest reward according to the relaxed ILP.
We reiterate, by testing values for the second action, and solving the relaxed version with respect to $a_3 \ldots a_d \in [0, 1]^{d-2}$, and so on.
Let $\texttt{LP}$ be the function that takes as input the current UCBs and a fixed block of size $s < d$, and outputs the best continuous solution in $[0, 1]^{d-s}$ for actions $a_{s+1} \ldots a_d$.
Let $\texttt{reward}$ be the function returning the objective value of any sequence (possibly partially continuous).
Our heuristic is summarized in \Cref{alg:rs}.

\begin{algorithm}[!ht]
\SetKwInOut{Input}{input}
\SetKwInOut{Init}{init}
\SetKwInOut{Parameter}{Param}
\caption{Approximate ILP Solver}
\label{alg:rs}
\Input{~Current UCBs $U_t = [U_t(i, j)] \in \mathbb{R}^{K \times 2d}$ for all $i \le K$ and $-d \le j \le d$}\vspace{0.1cm}
\Init{~\texttt{block = []}}\vspace{0.1cm}
\For{$s=1 \ldots d$}{\vspace{0.1cm}
    
    \For{$i=1 \ldots K$}{\vspace{0.1cm}
        $\texttt{block}_\mathrm{\,tmp} = \texttt{block} + [a_i]$\hspace{2.4cm}\tcp{test $a_i$ as next discrete action (current block size:\,$s$)}\vspace{0.1cm}
        
        $\texttt{block}_\mathrm{\,cont} = \texttt{LP}(U_t, \texttt{block}_\mathrm{\,tmp})$\hspace{1.55cm}\tcp{find the best continuous continuation (of size $d - s$)}\vspace{0.1cm}
        
        $r^i = \texttt{reward}(\texttt{block}_\mathrm{\,tmp}, \texttt{block}_\mathrm{\,cont})$\hspace{0.9cm}\tcp{compute the total relaxed reward of the half-discrete}\vspace{0.1cm}
        
        \hspace{6.2cm}\tcp{half-continuous block (of overall size $d$)}
    }
    \vspace{0.15cm}
    
    $i^* = \argmax_{i \le K} ~ r^i$\vspace{0.1cm}
    
    $\texttt{block} = \texttt{block} + [a_{i^*}]$\hspace{3.35cm}\tcp{keep the discrete action with highest relaxed reward}\vspace{0.1cm}
}

\Return{$\mathrm{block}$}
\end{algorithm}

\section{ADDITIONAL EXPERIMENTS}

In this section, we provide additional experiments to elaborate more on the performance of \texttt{ISI-CombUCB1}.
In particular, we enrich the comparison made in \Cref{sec:expe} by benchmarking two new algorithms based on calibration sequences (CS).
These approaches first calibrate the system by playing a permutation $\sigma$ of all the arms, and then play the best block according to the state reached after $\sigma$, see \eqref{eq:pbm_calib}.
CS-based approaches are known to be suboptimal, as they calibrate more arms than necessary, here $5$ ($=K$) instead of $2$ (number of different arms in the optimal block).
In addition, since the calibration phase is of size $K = 5 > 3$, it might prevent from seeing the spike of arm $a^{(1)}$ at $\tau=3$.
We therefore benchmark two permutations, $\texttt{CS-worst} = [a^{(1)}, a^{(2)}, a^{(3)}, a^{(4)}, a^{(5)}]$, and $\texttt{CS-best} = [a^{(5)}, a^{(4)}, a^{(3)}, a^{(2)}, a^{(1)}]$, named depending on whether they allow to see the spike or not.
Again, \Cref{alg:rs} is used as an inner subroutine to approximately compute the solution to the optimization problem~\eqref{eq:pbm_calib}.
Results are shown in \Cref{fig:add_exp_calibseq}.
\medskip

One may argue that the example of \Cref{sec:expe} is unfair to \texttt{OracleGreedy}, because the algorithm is tricked by phenomena that occur beyond the block size $d$, see \Cref{fig:exp_figures} (top).
We now present an example where it is not the case.
Consider the $2$-armed LSD bandit with Bernoulli rewards and
\[
\mu_1(\tau) = \begin{cases}0.06 &\text{if } \tau = 1\\0.95 &\text{if } \tau \ge 2\end{cases} \text{\qquad and \qquad} \mu_2(\tau) = 0.05 \quad \forall \tau\,.
\]
The reward functions are represented in \Cref{fig:add_exp_foolOG} (left).
Here, \texttt{OracleGreedy} constantly pulls arm $a^{(1)}$, for an average reward of $0.06$.
Instead, \texttt{ISI-CombUCB1} and \texttt{CombUCB1} alternate between arms $a^{(1)}$ and $a^{(2)}$, for an optimal average reward of $0.5$.
Note that \texttt{ISI-CombUCB1}, as opposed to \texttt{CombUCB1}, requires a calibration pull before playing this optimal block.
The latter bringing little reward, it slightly degrades the average reward and explains the small difference in performance seen in the plots.
However, besides the calibration, both algorithms converge towards the same block, which is optimal.
Results for $d=10$ are reported in \Cref{fig:add_exp_foolOG} (right).

In conclusion, calibration is an operation that might occasionally degrade the performance (in a controlled way), but that on the other hand guarantees to avoid risky decoys, such as the one discussed in \Cref{sec:expe}.
This behavior is in line with the fact that the regret of \texttt{ISI-CombUCB1} is well understood theoretically, while \texttt{CombUCB1} is hard to analyze due to the interferences.

The code is written in Python, and uses the \texttt{CVXOPT} package \citep{andersen2013cvxopt} to solve the Linear Programs.
It is publicly available at the following GitHub repository: \href{https://github.com/GiuliaClerici/LSD_bandits.git}{GiuliaClerici/LSD\_bandits}.

\begin{figure}[!t]
    \centering
    \includegraphics[width=.43\linewidth, height=6cm]{fig/arm_rwdfunc}
    \qquad
    \includegraphics[width=.4\linewidth, height=5.9cm]{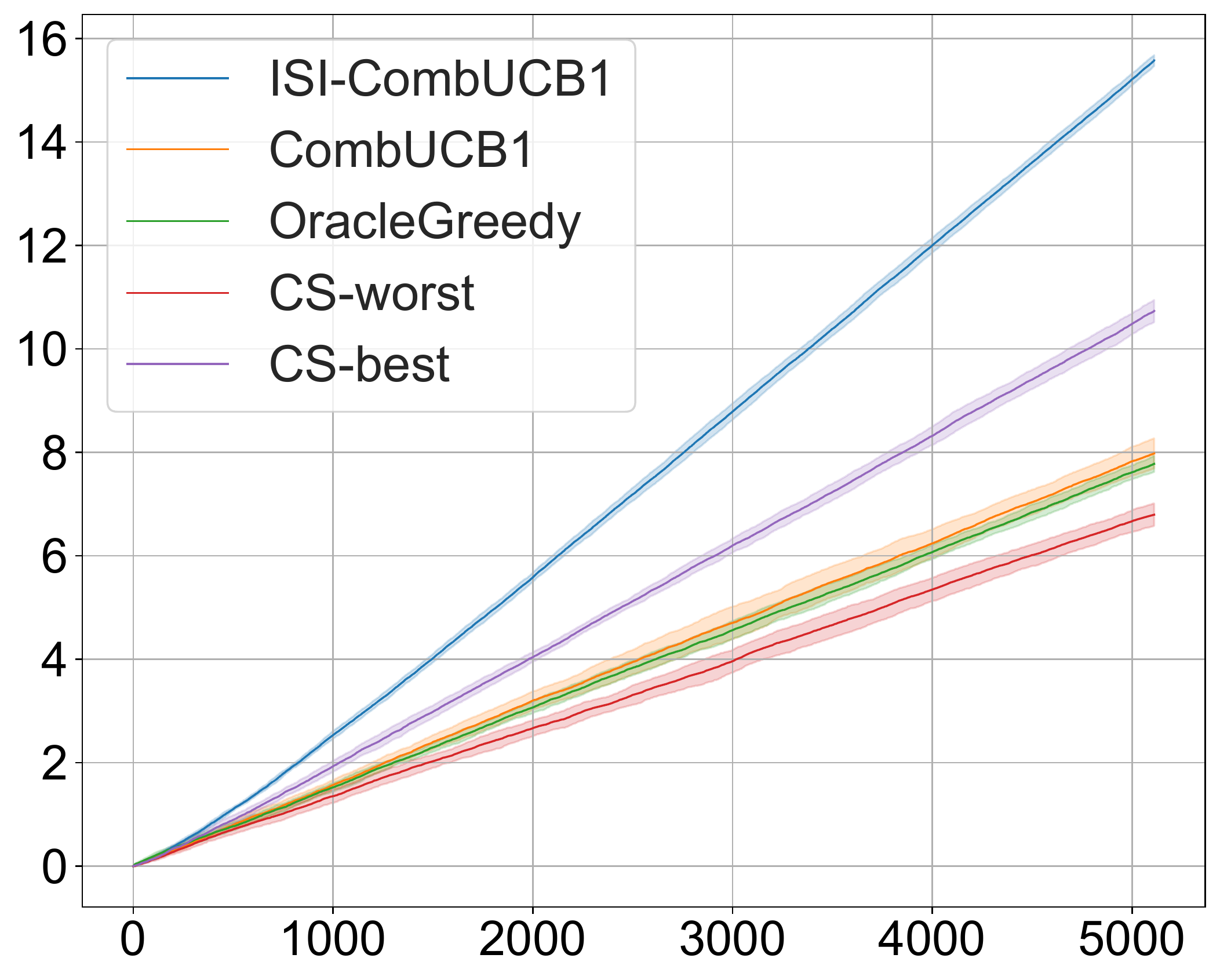}
    \caption{Reward functions with respect to $\tau$ (left) and cumulative rewards in thousands (right).}
    \label{fig:add_exp_calibseq}
\end{figure}

\begin{figure}[!t]
    \centering
    \includegraphics[width=.43\linewidth, height=6cm]{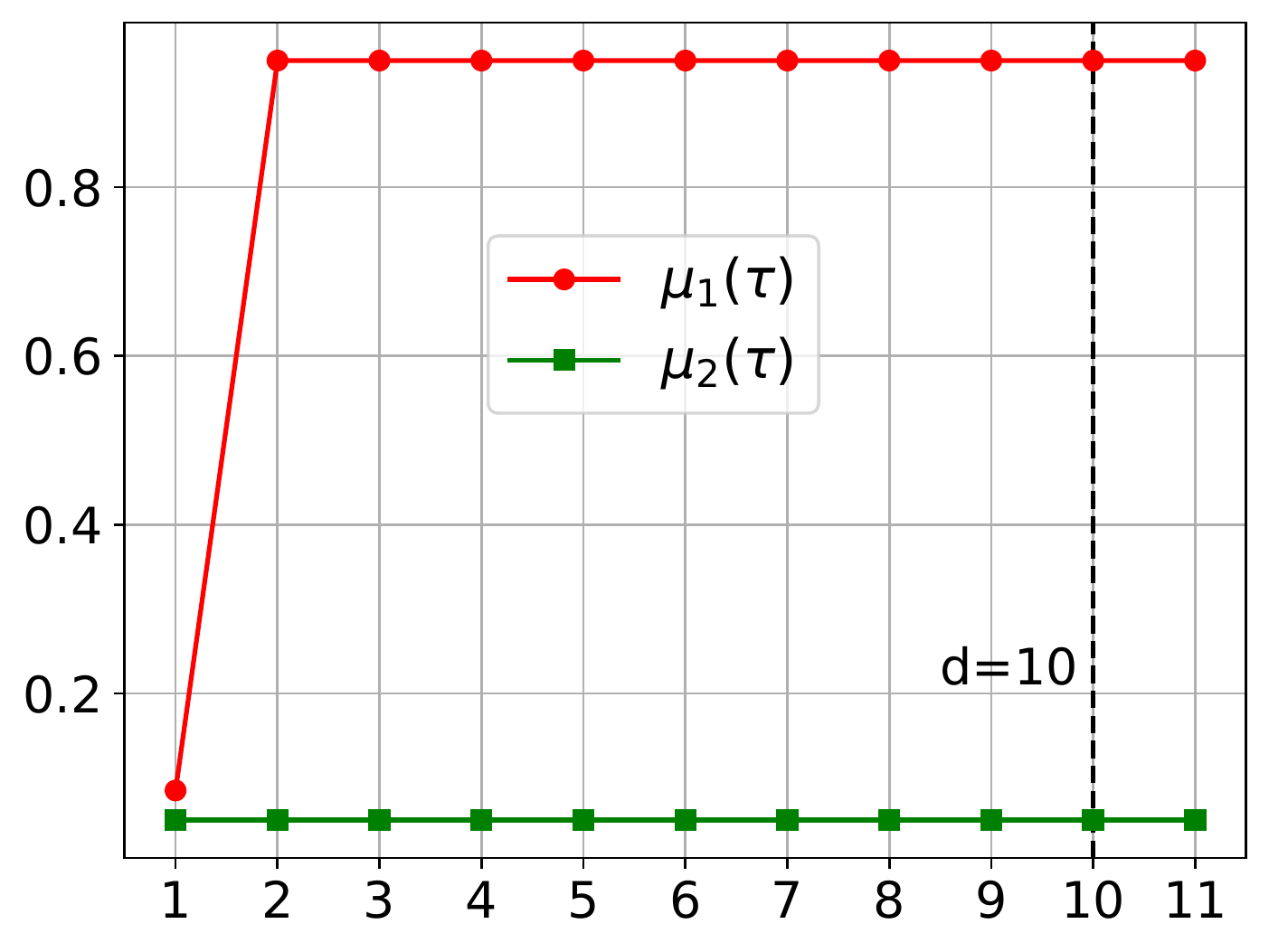}
    \qquad
    \includegraphics[width=.4\linewidth, height=6cm]{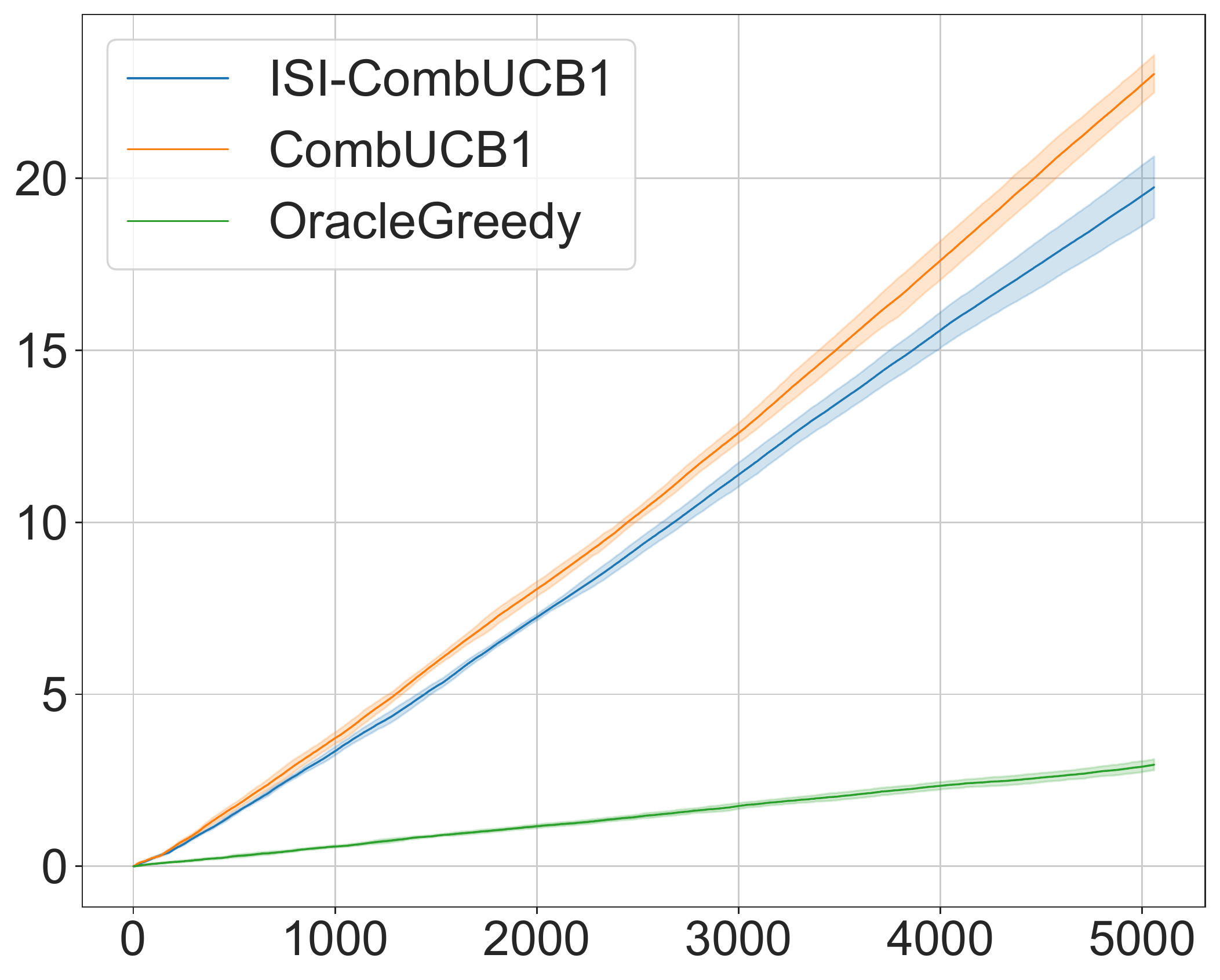}
    \caption{Reward functions with respect to $\tau$ (left) and cumulative rewards in thousands (right).}
    \label{fig:add_exp_foolOG}
\end{figure}

\end{document}